\documentclass{article} 
\usepackage{iclr2026_conference,times}

\usepackage{amsmath,amsfonts,bm}

\def\eqref#1{equation~\ref{#1}}

\def\1{\bm{1}}

\DeclareMathAlphabet{\mathsfit}{\encodingdefault}{\sfdefault}{m}{sl}
\SetMathAlphabet{\mathsfit}{bold}{\encodingdefault}{\sfdefault}{bx}{n}

\usepackage{hyperref}
\usepackage{url}
\let\eqref\relax
\DeclareRobustCommand{\eqref}[1]{\textup{{Eq. (\ref{#1})}}}
\usepackage[utf8]{inputenc} 
\usepackage[T1]{fontenc}    
\hypersetup{hidelinks,
    colorlinks=true,
    anchorcolor=blue,
    citecolor={olive},
    linkcolor=blue}
\usepackage{booktabs}       
\usepackage{amsfonts}       
\usepackage{nicefrac}       
\usepackage{microtype}      
\usepackage{xcolor}         
\usepackage{times}
\usepackage{soul}
\usepackage{graphicx}
\usepackage{amsmath}
\usepackage{amsthm}
\usepackage{amssymb}
\usepackage{booktabs}
\usepackage{algorithm}
\usepackage{algpseudocode}
\usepackage{subfig}
\usepackage{chngpage}
\usepackage{wrapfig}
\usepackage{enumitem}
\usepackage{multirow}
\usepackage{subfloat}
\usepackage{color}  
\usepackage{threeparttable}

\newtheorem{theorem}{Theorem}

\newtheorem{assumption}{Assumption}
\newtheorem{lemma}{Lemma}

\definecolor{darkblue}{rgb}{0,0.08,0.45}
\definecolor{cred}{HTML}{D62728}
\definecolor{cblue}{HTML}{1F77B4}
\definecolor{cgreen}{HTML}{79AB76}
\definecolor{cgrey}{rgb}{0.6,0.6,0.6}

\definecolor{ourpurple}{HTML}{CB297B}
\definecolor{ourblue}{HTML}{0076BA}
\definecolor{ourmiddle}{HTML}{66509B}

\renewcommand{\mathbf}{\boldsymbol}

\makeatletter

\usepackage{booktabs}
\usepackage{siunitx}

\title{Mean Flow Policy with Instantaneous Velocity Constraint for One-step Action Generation
}

\author{Guojian Zhan$^{1,2}$\thanks{Equal contribution.} , Letian Tao$^{1*}$ , Pengcheng Wang$^{2}$, Yixiao Wang$^{2}$, Yiheng Li$^{2}$, Yuxin Chen$^{2}$, \\ 
\textbf{Hongyang Li}$^{3}$,
\textbf{Masayoshi Tomizuka}$^{2}$, \textbf{Shengbo Eben Li}$^{1}$\thanks{Corresponding author. \texttt{lishbo@tsinghua.edu.cn.}} \\
$^1$ School of Vehicle and Mobility \& College of AI, Tsinghua University\\
$^2$ Berkeley AI Research (BAIR), UC Berkeley\\
$^3$ The University of Hong Kong \\
}

\iclrfinalcopy 
\begin{document}

\maketitle

\begin{abstract}
 Learning expressive and efficient policy functions is a promising direction in reinforcement learning (RL). While flow-based policies have recently proven effective in modeling complex action distributions with a fast deterministic sampling process, they still face a trade-off between expressiveness and computational burden, which is typically controlled by the number of flow steps. In this work, we propose mean velocity policy (MVP), a new generative policy function that models the mean velocity field to achieve the fastest one-step action generation. To ensure its high expressiveness, an instantaneous velocity constraint (IVC) is introduced on the mean velocity field during training. We theoretically prove that this design explicitly serves as a crucial boundary condition, thereby improving learning accuracy and enhancing policy expressiveness. Empirically, our MVP achieves state-of-the-art success rates across several challenging robotic manipulation tasks from Robomimic and OGBench. It also delivers substantial improvements in training and inference speed over existing flow-based policy baselines. 
\end{abstract}


\section{Introduction}
A promising topic in reinforcement learning (RL) community is to develop expressive and efficient policies, particularly in complex control environments where action distributions can be multi-modal~\citep{zhu2023diffusion, wangdiffusion}. {Generative policies}, such as diffusion model and flow matching, have recently emerged as a powerful alternative to Gaussian or mixture policies by transforming simple base distributions into flexible action distributions via learnable transformations~\citep{song2020score, chi2023diffusion}. However, a key limitation of existing generative policies is their dependence on iterative multi-step refinement from noise to actions~\citep{wang2024diffusion, wang2025enhanced, ding2024diffusion}. This computational dependency imposes a significant overhead that hinders training speed, particularly for online RL where action sampling is a per-step requirement~\citep{li2023rlbook, yang2023policy}. Moreover, this overhead translates to considerable inference latency, which is a major impediment to achieving high closed-loop performance in real-time control systems~\citep{zhan2024transformation, zhan2025physics, jiang2024rocket}. 
\begin{wrapfigure}[10]{r}{0.42\textwidth}
\centering
\includegraphics[width = 0.38\textwidth]{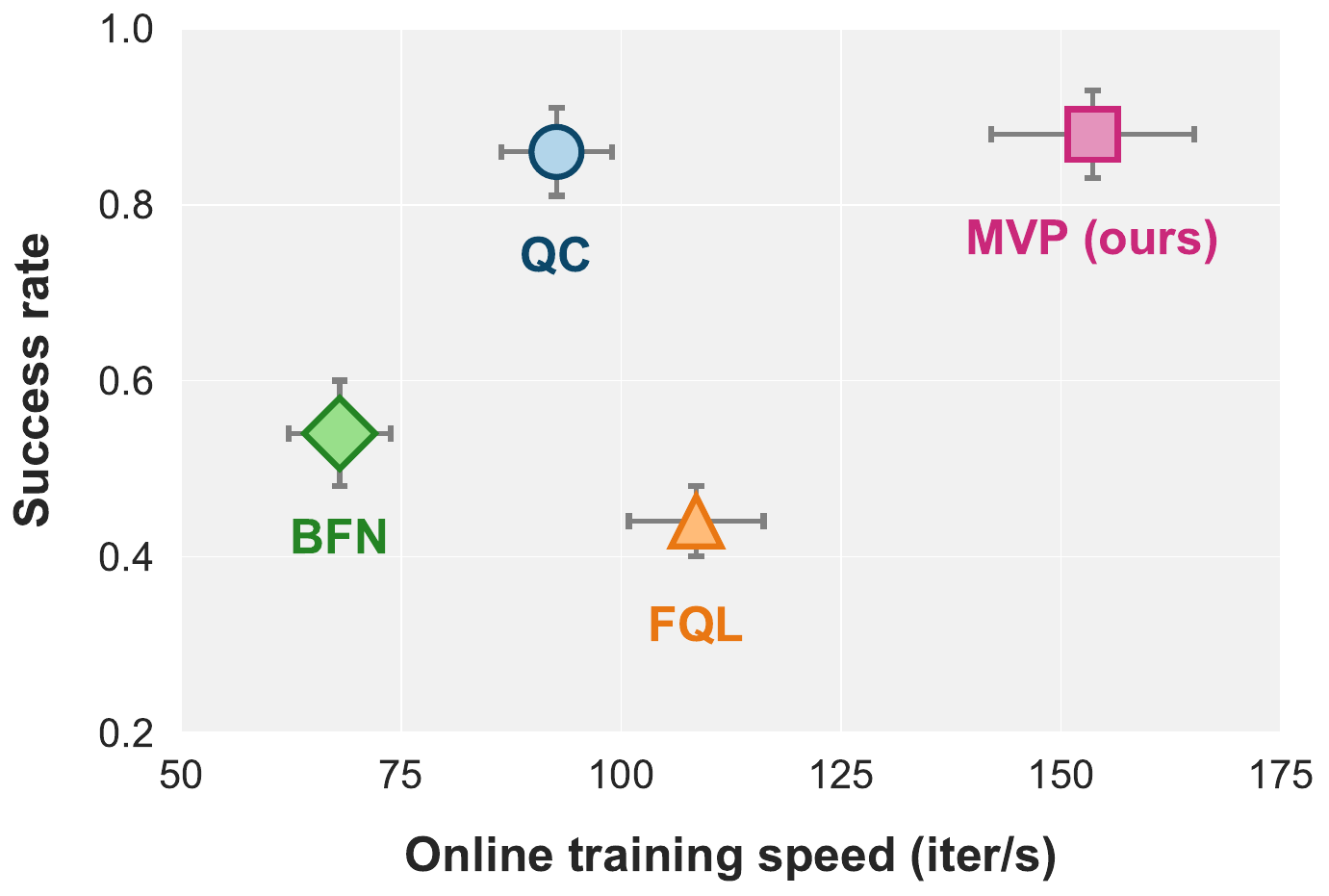}
\caption{\textbf{Performance-efficiency comparison on 9 robotic manipulation tasks.}}
\label{fig:success_speed}
\end{wrapfigure}

A question naturally arises: 
\textit{Can we unify the expressiveness of generative policies with the efficiency of one-step action generation for online RL?} 

In this paper, we propose the mean velocity policy (MVP) as an affirmative answer. While existing flow policies learn instantaneous velocities and require multi-step iterative sampling~\citep{lipmanflow, park2025flow, bharadhwaj2024roboagent}, MVP instead learns the mean velocity field~\citep{geng2025mean}. This design enables a direct, single-step mapping from a base Gaussian noise to a multi-modal action distribution, thereby preserving the expressive power of flow-based models while drastically improving training and inference efficiency~\citep{kornilov2024optimal}.


Although the time-efficiency gains of MVP are very promising, its learning difficulty is  higher than that of a standard flow policy. One reason is that our MVP requires modeling the mean velocity for any time interval specified by two time points~\citep{geng2025mean}. 
A more significant reason is that its learning process is governed by a first-order ordinary differential equation (ODE) derived from the definition of mean velocity. However, this ODE theoretically suffers from the problem of multiple solutions due to a lack of explicit boundary conditions, that is, the value at any boundary point is not enforced. This poses a non-trivial challenge to learning accuracy and consequently affects policy expressiveness~\citep{birkhoff1923boundary}.


To address this, we introduce an instantaneous velocity constraint (IVC) to compensate for the lack of boundary conditions. Intuitively, IVC pairs the average velocity loss for each interval with an instantaneous velocity loss at the interval's start point. In practice, IVC is implemented as a auxiliary policy loss, adding negligible computational overhead while materially improving accuracy.
We evaluate our MVP on {Robomimic}~\citep{robomimic2021} and {OGBench}~\citep{ogbench_park2024}, two demanding robot manipulation benchmarks. As shown in Figure \ref{fig:success_speed} and Table \ref{infer_time}, MVP achieves state-of-the-art success rates while delivering substantial speed-ups in training and per-step inference over strong flow-policy baselines, on average across both suites.

Our contributions are summarized threefold: 
\begin{itemize}
    \item  We propose a new flow-based policy, namely {mean velocity policy (MVP)}, that enables fastest one-step action generation. By modeling the mean velocity field, MVP retains the expressiveness of generative policies while eliminating multi-step sampling overhead.
    \item We design a training enhancement technique, namely instantaneous velocity constraint (IVC), to improve the learning accuracy of mean velocity field. This technique explicitly serves as a boundary condition, thereby stabilizing learning and enhancing policy expressiveness.
    \item We empirically achieve state-of-the-art success rates on two challenging robotic manipulation benchmarks: {Robomimic} and {OGBench}. Moreover, our approach provides a substantial speedup in both training and  inference over existing flow-policy baselines, highlighting its practicality for real-time application.
\end{itemize}

\section{Preliminaries}

\paragraph{Reinforcement Learning.}\label{sec:policydef}
We consider an agent interacting with an environment modeled as a Markov Decision Process (MDP), defined by a tuple $\mathcal{M}=\left\langle\mathcal{S},\mathcal{A},\mathcal{P},r,\gamma\right\rangle$~\citep{li2023rlbook}. The components are the state space $\mathcal{S}\subseteq\mathbb{R}^n$, the action space $\mathcal{A}\subseteq\mathbb{R}^m$, the state transition function $\mathcal{P}(s'|s, a)$, the reward function $r(s,a)$, and the discount factor $\gamma \in [0,1)$.
The primary goal in reinforcement learning (RL) is to learn a policy $\pi(a|s)$ that maximizes the expected cumulative discounted reward, namely return, given by
\begin{equation}
J_\pi=\mathbb{E}_{\pi,\mathcal{P}}\left[\sum_{k=0}^\infty\gamma^k r(s_k,a_k)\right].
\end{equation}
Grounded in the off-policy learning paradigm, our approach utilizes an action-value function (Q-function) to guide policy improvement, which denotes the expected cumulative return for taking an action $a$ in a state $s$ and thereafter following the policy $\pi$.
\begin{equation}
Q^{\pi}(s,a)=\mathbb{E}_{\pi,\mathcal{P}}\left[\sum_{i=0}^\infty\gamma^i r(s_i,a_i)|s_0=s,a_0=a\right].
\end{equation}
The optimal action-value function, $Q^*(s, a)$, represents the maximum expected return achievable from state $s$ by taking action $a$. The optimal policy $\pi^*$ can then be found by selecting the action that maximizes this function: $\pi^{*}(s)=\arg\max_{a\in\mathcal{A}}Q^{*}(s,a)$.




\paragraph{Flow Matching.}
Flow matching is a principled methodology for constructing continuous-time generative models~\citep{lipmanflow}. In contrast to diffusion models, which employ stochastic differential equations (SDEs) \citep{song2020score}, flow matching is built upon deterministic dynamics governed by an ordinary differential equation (ODE). By directly learning a continuous-form instantaneous vector field, this approach simplifies the training objective and enables more efficient sampling. Specifically, it trains a neural network  \( v_\theta: \mathbb{R}^d \times [0,1]\rightarrow \mathbb{R}^d \) to parameterize a velocity field \( v(x(t), t) \) that matches a predefined conditional target velocity $v(1)$ .

For a source distribution \( q(x(0)) \) and a target distribution \( p(x(1)) \), the velocity field is trained by minimizing the flow matching loss~\citep{lipman2024flow}:
\begin{equation}
    \mathcal{L}_{\text{FM}}(\theta) = \mathbb{E}_{\substack{t \sim \mathcal{U}([0,1]) \\ x(1) \sim p,\, x(0) \sim q}} \left\| v_\theta\big( x(t),t\big) - v(x(t),t) \right\|_2^2,
    \label{eq:cfm}
\end{equation}
where any intermediate point along the generating path \( x(t) = t x(1) + (1-t) x(0) \) is a linear interpolation between source and target points. The velocity for this path is assumed to be a constant vector \( v(x(t),t) = x(1) - x(0) \). This formulation defines the target vector field along a straight path between the source and target samples. Once trained, the learned field \( v_\theta \) defines a {probability flow} via the following ODE:
\begin{equation}
    \frac{dx(t)}{dt} = v_\theta(x(t), t), \ x(0) \sim q,
    \label{eq:ode}
\end{equation}
which allows us to effectively generate samples of the target distribution $p$, starting from samples of the source distribution $q$. Although flow matching is conceptually designed to generate along a straight line, the paths still curve in practice when fitting between two distributions~\citep{song2023consistency}. Therefore, multi-step discretization and numerical methods like the Euler method are often required to solve the ODE in \eqref{eq:ode} to obtain high-quality generated results~\citep{park2025flow}.

\section{Method}

First, we introduce the mean velocity policy (MVP), showing how its integration with a ``generate-and-select'' mechanism enables a direct mapping from noise to optimal actions. We then present the instantaneous velocity constraint (IVC) and theoretically justify its role in improving the learning accuracy. Finally, the complete pseudo-code for our mean flow RL algorithm is provided.
\subsection{mean velocity policy}
\begin{wrapfigure}[14]{r}{0.4\textwidth}
  \centering
\includegraphics[width=0.95\linewidth, trim={0.0cm 0.4cm 0.0cm 0.2cm}, clip]{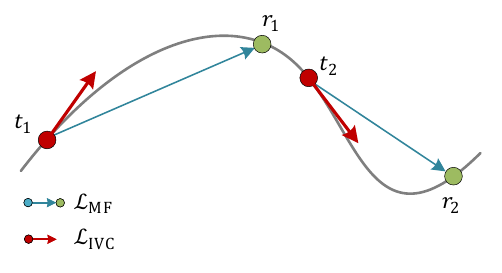}
  \caption{Velocity field: blue arrows denote the mean velocity over a time interval, with red arrows representing the instantaneous velocity at a time point.}
  \label{fig:method_fig}
\end{wrapfigure}
In RL, a policy $\pi(\cdot|s)$ defines a distribution over actions given a state $s$. For standard flow-based policies, this mapping is framed as a generative process: a velocity model, $v(a(t), t, s)$, transforms a standard Gaussian noise (source) into the optimal action (target), with the state serving as a conditioning input. The final output action $a(1)$ is calculated by
\begin{equation}
    a(1) = a(0) + \int_0^1 v(a(\tau), \tau, s)d\tau, \ a(0) \sim \mathcal{N}(0,I).
    \label{eq:flow_action}
\end{equation}
Unlike standard flow policies that learn the instantaneous velocity field $v(a(t), t, s)$, we center on the mean velocity field $u(a(t), t, r, s)$, modeling the mean velocity over any given time interval $[t, r]$ as
\begin{equation}
    u(a(t), t, r, s) \triangleq \frac{1}{r-t} \int_{t}^{r} v(a(\tau), \tau, s) d\tau.
    \label{eq:def_mean_vel}
\end{equation}
The relationship between these two types of velocities is shown in Figure \ref{fig:method_fig}. If the mean velocity model $u^*$ is ideally learned, the policy inference is formulated as
\begin{equation}
    a(1) = a(0) + u^*(a(0), 0, 1, s), \ a(0) \sim \mathcal{N}(0,I).
    \label{eq:mean_flow_action}
\end{equation}

\paragraph{(1) How to imitate a given target action.}
To train the mean velocity model $u$, we first multiply both sides of \eqref{eq:def_mean_vel} by $(r-t)$ and then differentiate with respect to $t$ ( treating $r$ as independent of $t$), which yields the mean flow identity:
\begin{equation}
    -u(a(t), t, r, s) + (r-t)\frac{d}{dt}u(a(t), t, r, s) = - v(a(t), t, s).
    \label{eq:mf_identity}
\end{equation}
We assume there is a target optimal action $a^*$, serving as $a(1)$ for imitation. For two randomly sampled time points, $t$ and $r$ (with $t<r$), the intermediate point $a(t)$ is defined by the linear interpolation: $a(t)=t\cdot a(1)+(1-t)\cdot a(0)$, and $v = a^* - a(0)$. Let $\theta$ denote the learnable parameters, the training objective is to minimize the residual of the mean flow identity in \eqref{eq:mf_identity} as
\begin{equation}
    \mathcal{L}_{\text{MF}}(\theta) = \mathbb{E}_{t, r<t, a(t)} \left\| u_{\theta}(a(t), t, r, s) - \text{sg}\left(v - (t-r) \frac{d}{dt}u_{\theta}(a(t), t, r, s)\right) \right\|^2_2,
    \label{eq:l_mf}
\end{equation}
where ``sg'' denotes the stop-gradient operator to stabilize training. 
Calculating this training loss requires computing the total time derivative term,  
$\frac{d}{dt}u$. Using the chain rule, this term is expanded as shown in  \eqref{eq:total_derivative}, and its computation can be performed efficiently using the Jacobian-vector product (JVP) in modern automatic differentiation libraries.
\begin{equation}
    \frac{d}{dt}u_\theta(a(t), t, r, s) = v(a(t), t, s) \cdot \frac{\partial}{\partial a} u_\theta(a(t),r,t, s) + \frac{\partial}{\partial t}u_\theta(a(t),r,t,s),
    \label{eq:total_derivative}
\end{equation}
By minimizing \eqref{eq:l_mf}, the mean velocity model $u$ can effectively transform  a standard Gaussian noise into the desired target action distribution.
In RL, however, there is no ground-truth dataset of optimal actions to imitate. Next, we will detail how to gradually find better actions and eventually approach the optimal target action.

\textbf{(2) How to find the optimal target action to imitate.}
Finding the optimal action $a^*$ directly is not realistic. However, we can use the Q-function to progressively find better actions as imitation targets, and eventually find the optimal action $a^*$ in a bootstrap manner. This mechanism is called ``generate-and-select'' or ``best-of-$N$''.

In practice, at any given state $s$, the agent first generate $N$ diverse candidate actions as 
\begin{equation}
    a^i = a_k^i(1)=\epsilon_i + u_\theta(\epsilon_i, 0, 1, s), \ \ a^i(0)=\epsilon^i \sim \mathcal{N}(0, I), \ i=1,\cdots,N.
    \label{eq:one_step_generation}
\end{equation}
Then the critic function $Q_\phi$ parameterized by $\phi$ is employed to evaluate all candidates, and the action yielding the highest Q-value is identified as the final output action for that state:
\begin{equation}
    a^\star = \arg\max_{a^i} Q_\phi(s, a^i).
    \label{eq:best_of_n}
\end{equation}
We treat the combined mean flow generation process in \eqref{eq:one_step_generation} and the best-of-$N$ mechanism in \eqref{eq:best_of_n} as a unified policy $\pi^{\text{unified}}_\theta$ , or simply denoted as $\pi_\theta$. The resulting action, $a^\star$, then serves three purposes: (1) interacting with the environment, (2) acting as the target action for policy training, and (3) calculating the target value for value training.

Although this bootstrap mechanism is intuitive, next we will formally prove that such a imitation-based update guarantees policy improvement by the following theorem.

\begin{theorem}[mean velocity policy Improvement]
\label{theorem:policy_improvement}
Let the new policy $\pi_{\text{new}}$ be derived from an old
policy $\pi_{\text{old}}$ via the $N$-candidate generative update process described above. Under assumptions of a bounded Q-function error ($\epsilon_Q$), $L_Q$-Lipschitz continuity of the Q-function, and a bounded mean flow matching error ($\epsilon_A$), the performance difference between these two policies is lower-bounded by
\begin{equation}
    V^{\pi_{\text{new}}}(s) - V^{\pi_{\text{old}}}(s) \ge \underbrace{\mathbb{E}_{\tau \sim \pi_{\text{new}}}\left[ \sum_{t=0}^{\infty} \gamma^t \Delta_N^{\pi_{\text{old}}}(s_t) \right]}_{\text{BFN improvement term}~\Delta_1} - \underbrace{\frac{2\epsilon_Q + L_Q \epsilon_A}{1-\gamma}}_{\text{fitting error term}~\Delta_2},
\end{equation}
where $V(s)$ is the state-value function, and  $\Delta_N^{\pi_{\text{old}}}(s)$ is the {best-of-$N$ advantage gain}, satisfying
\begin{equation}
    \Delta_N^{\pi_{\text{old}}}(s) := \mathbb{E}_{a_1, \dots, a_N \sim \pi_{\text{old}}(\cdot|s)}\left[\max_{i=1, \dots, N} Q^{\pi_{\text{old}}}(s, a_i)\right] - V^{\pi_{\text{old}}}(s) \geq 0.
\end{equation}
\end{theorem}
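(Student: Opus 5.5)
The plan is to start from the performance difference lemma,
\begin{equation*}
V^{\pi_{\text{new}}}(s)-V^{\pi_{\text{old}}}(s)=\mathbb{E}_{\tau\sim\pi_{\text{new}}}\Big[\sum_{t=0}^\infty \gamma^t\big(\mathbb{E}_{a\sim\pi_{\text{new}}(\cdot|s_t)}Q^{\pi_{\text{old}}}(s_t,a)-V^{\pi_{\text{old}}}(s_t)\big)\Big],
\end{equation*}
which follows by telescoping the Bellman equation of $\pi_{\text{old}}$ along trajectories of $\pi_{\text{new}}$. Everything then reduces to a per-state bound
\begin{equation*}
\mathbb{E}_{a\sim\pi_{\text{new}}(\cdot|s)}Q^{\pi_{\text{old}}}(s,a)\ \ge\ \mathbb{E}_{a_{1:N}\sim\pi_{\text{old}}}\Big[\max_i Q^{\pi_{\text{old}}}(s,a_i)\Big]-(2\epsilon_Q+L_Q\epsilon_A).
\end{equation*}
Inserting this bound and summing the geometric series $\sum_t\gamma^t=1/(1-\gamma)$ on the constant error gives exactly $\Delta_1-\Delta_2$. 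The claim $\Delta_N^{\pi_{\text{old}}}(s)\ge0$ is immediate: $\max_i Q(s,a_i)\ge Q(s,a_1)$, and $\mathbb{E}_{a_1\sim\pi_{\text{old}}}Q^{\pi_{\text{old}}}(s,a_1)=V^{\pi_{\text{old}}}(s)$.

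For the per-state bound I would formalize the update as follows. Draw $a_1,\dots,a_N\sim\pi_{\text{old}}(\cdot|s)$ and let $a^\star=\arg\max_i Q_\phi(s,a_i)$. The new policy is the mean-flow model trained to imitate $a^\star$, so its output $\hat a$ is a coupled sample with $\|\hat a-a^\star\|\le\epsilon_A$ (in expectation or almost surely). This is how I read the assumption of bounded mean flow matching error, i.e.\ a coupling or Wasserstein-type bound between the generated action and the imitation target.

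The chain of inequalities then runs in four steps.
\begin{itemize}
\item[(i)] By $L_Q$-Lipschitzness, $Q^{\pi_{\text{old}}}(s,\hat a)\ge Q^{\pi_{\text{old}}}(s,a^\star)-L_Q\epsilon_A$.
\item[(ii)] By the Q-error bound $|Q_\phi-Q^{\pi_{\text{old}}}|\le\epsilon_Q$, we get $Q^{\pi_{\text{old}}}(s,a^\star)\ge Q_\phi(s,a^\star)-\epsilon_Q$.
\item[(iii)] Since $a^\star$ maximizes $Q_\phi$ among the candidates, $Q_\phi(s,a^\star)=\max_i Q_\phi(s,a_i)$, which is at least $\max_i Q^{\pi_{\text{old}}}(s,a_i)-\epsilon_Q$. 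This holds because the max of functions that are uniformly within $\epsilon_Q$ differs by at most $\epsilon_Q$.
\item[(iv)] Taking expectation over the candidates and the coupling yields the per-state bound with total loss $2\epsilon_Q+L_Q\epsilon_A$.
\end{itemize}

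The main obstacle is modeling step (i): making precise what "bounded mean flow matching error" means, so that the Lipschitz argument applies to the policy's actual output distribution rather than to a training loss. If only a distributional bound such as $W_1(\pi_{\text{new}}(\cdot|s),\delta\text{-law of }a^\star)\le\epsilon_A$ is available, I would use the Kantorovich--Rubinstein duality: $|\mathbb{E}_{\pi_{\text{new}}}Q-\mathbb{E}_{a^\star}Q|\le L_Q W_1$. That gives the same $L_Q\epsilon_A$ term, and I would state the assumption in that form. A second subtlety is that the Lipschitz and error bounds must hold for $Q^{\pi_{\text{old}}}$ and $Q_\phi$ uniformly over states visited by $\pi_{\text{new}}$. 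I would assume sup-norm versions so that the per-state bound can be applied inside the expectation over $\tau\sim\pi_{\text{new}}$.
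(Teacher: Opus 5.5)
Your proposal is correct and follows the paper's proof essentially step for step. Both use the performance difference lemma, a per-state bound via $L_Q$-Lipschitzness and the expected-distance matching assumption, the two applications of the $\epsilon_Q$ bound around the best-of-$N$ selection, and the geometric series; your coupling is exactly the paper's conditional matching distribution $M(\cdot\mid a^*)$ with $\mathbb{E}[d(a_{\text{new}},a^*)]\le\epsilon_A$. The only divergence is the nonnegativity of $\Delta_N^{\pi_{\text{old}}}(s)$: you obtain it directly from $\max_i Q^{\pi_{\text{old}}}(s,a_i)\ge Q^{\pi_{\text{old}}}(s,a_1)$, whereas the paper rewrites $\Delta_N^{\pi_{\text{old}}}(s)=\int_{-\infty}^{\infty}(F_X(x)-F_X(x)^N)\,dx$ via the CDF of $X=Q^{\pi_{\text{old}}}(s,a)$. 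Your route is more elementary, while the paper's also yields monotonicity in $N$.
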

\begin{proof}
     See Appendix~\ref{sec:appendix_proof}.
\end{proof}
This theorem decomposes the performance difference into two distinct components: a \textit{BFN improvement term} $\Delta_1$ and a \textit{fitting error term} $\Delta_2$. As we prove in Appendix~\ref{sec:appendix_non_negative_gain}, the best-of-$N$ advantage gain ($\Delta_N^{\pi_{\text{old}}}$) in $\Delta_1$ is strictly non-negative, implying that the policy improvement can be driven by the benefit of sampling $N$ diverse action candidates. It is also important to note that this improvement is partially counteracted by the fitting error term $\Delta_2$, which stems from the critic's inaccuracy ($\epsilon_Q$) and the mean flow matching error ($\epsilon_A$). This highlights the importance of reducing $\epsilon_A$ to further enhance policy performance. In the next subsection, we will introduce the instantaneous velocity constraint (IVC) to help reduce this fitting error.

\subsection{The Instantaneous Velocity Constraint as a Boundary Condition}
\label{sec:ivc}
As we mentioned above, the underlying principle for training the mean velocity model is \eqref{eq:mf_identity}, which is a first-order ordinary differential equation (ODE) with respect to $u$.
Mathematically speaking, we need to know both the dynamics and at least a boundary condition to accurately solve the unique solution of a ODE.
Back to our practical setting, \eqref{eq:mf_identity} only provides the dynamics given $t < r$.  Although sampling pairs where $r$ is very close to $t$ could implicitly serve as the boundary condition, such events are too rare to ensure robust training.

Inspired by this, we introduce the {instantaneous velocity constraint (IVC)}, a training objective that explicitly enforces a boundary condition at $t$. The mean velocity from $t$ to $t$ is exactly the known instantaneous velocity $v = a^* - a(0)$, so the IVC objective is expressed as:
\begin{equation}
    \mathcal{L}_{\text{IVC}}(\theta) = \mathbb{E}_{t, a(t)} \left\| u_\theta(a(t), t, t) - v \right\|_2^2.
    \label{eq:l_ivc}
\end{equation}
To understand the effectiveness of this IVC objective, we first derive the following theorem, which demonstrates the multiplicity of solutions for \eqref{eq:mf_identity} in the absence of a boundary condition.

\begin{theorem}[Multiplicity of Solutions for the Mean Flow Identity]
\label{theorem:multiplicity}
Let the \textbf{cumulative error $\Delta_u$} be defined as the difference between the learned and the true mean velocity field, $\Delta_u(a(t), t, r) \triangleq u_\theta(a(t), t, r) - u^*(a(t), t, r)$. Assume $u_\theta$ perfectly satisfies the mean flow identity (\eqref{eq:mf_identity}) for all $t < r$. Then, its cumulative error $\Delta_u$ satisfies
\begin{equation}
\Delta_u(a(t), t, r) = \frac{C(a, r)}{r-t}
\label{equ:cumulative error}
\end{equation}
where $C(a, r)$ is an integration constant independent of time $t$.
\end{theorem}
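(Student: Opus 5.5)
The approach is to subtract two instances of the mean flow identity, one for the learned field and one for the true field, and then solve the resulting homogeneous first-order linear ODE in $t$.

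\textbf{Step 1: both fields satisfy the same identity.} The true field $u^*$ satisfies \eqref{eq:mf_identity} by construction. Multiplying \eqref{eq:def_mean_vel} by $(r-t)$ and differentiating in $t$ gives
\begin{equation*}
-u^* + (r-t)\tfrac{d}{dt}u^* = -v .
\end{equation*}
By hypothesis, $u_\theta$ satisfies the same equation along the same trajectory $a(t)$ with the same instantaneous velocity $v(a(t),t)$, for all $t<r$.

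\textbf{Step 2: subtract.} Subtracting the two identities cancels the inhomogeneous term $-v$. Here $d/dt$ is the total derivative along the path, with $r$ held fixed. The cumulative error therefore satisfies the homogeneous linear equation
\begin{equation*}
(r-t)\frac{d}{dt}\Delta_u(a(t),t,r) - \Delta_u(a(t),t,r) = 0 .
\end{equation*}

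\textbf{Step 3: solve.} Observe that the left-hand side equals $\frac{d}{dt}\big[(r-t)\Delta_u\big]$. Indeed,
\begin{equation*}
\frac{d}{dt}\big[(r-t)\Delta_u\big] = -\Delta_u + (r-t)\frac{d}{dt}\Delta_u .
\end{equation*}
Hence $(r-t)\Delta_u(a(t),t,r)$ is constant along the trajectory on $t\in[0,r)$. Call this constant $C(a,r)$. It depends on $r$ and on which trajectory is followed, which the notation $a$ records, but not on $t$. Dividing by $r-t>0$ yields \eqref{equ:cumulative error}.

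\textbf{Main obstacle: interpreting $C(a,r)$.} The delicate point is stating precisely what $C$ depends on. Because the derivative is total along the flow, "constant" means constant along each characteristic curve $t\mapsto a(t)$. So $C$ is naturally indexed by the trajectory, for example through its starting point $a(0)$, together with $r$. I will phrase this explicitly and note that the identity imposes no constraint fixing $C$: any choice of $C$ produces a valid solution, which is the claimed multiplicity.

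Finally, I will remark that the IVC boundary condition $\Delta_u(a(t),t,t)=0$ is what removes this freedom. Unless $C=0$, the error $C/(r-t)$ blows up as $t\to r$, so enforcing the boundary value forces $C\equiv 0$ and hence uniqueness.
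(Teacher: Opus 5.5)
Your proof is correct and follows the same route as the paper. You subtract the two instances of the mean flow identity to get the homogeneous equation $-\Delta_u + (r-t)\frac{d}{dt}\Delta_u = 0$, recognize it as $\frac{d}{dt}\big[(r-t)\Delta_u\big] = 0$, and integrate in $t$; your remark that $C$ is constant only along each characteristic $t \mapsto a(t)$ (so it is indexed by the trajectory, e.g.\ via $a(0)$, together with $r$) usefully sharpens the paper's looser notation $C(a,r)$.
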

\begin{proof}
By assumption, both $u_\theta$ and $u^*$ precisely satisfy the mean flow identity. Subtracting the identity for $u^*$ from that of $u_\theta$ yields a homogeneous linear differential equation with respect to the cumulative error $\Delta_u$ as
\begin{equation}
\Delta_u + (t-r) \frac{d}{dt}\Delta_u = 0. 
\end{equation}
By the product rule for derivatives, this is equivalent to $\frac{d}{dt} [ (r-t)\Delta_u ] = 0$. Integrating with respect to $t$ yields $(r-t)\Delta_u = C(a, r)$, which completes the proof.
\end{proof}

Theorem~\ref{theorem:multiplicity} reveals that any solution learned by perfectly minimizing the $\mathcal{L}_{\text{MF}}$ loss belongs to a family of functions with a unknown constant $C(a, r)$. Because the $\mathcal{L}_{\text{MF}}$ loss is blind to the boundary, it cannot provide a gradient to force $C(a, r)$ to zero. This allows an arbitrary, persistent bias to exist in the learned $u_\theta$. 
Next, we derive the following theorem to prove that the explicit introduction of IVC eliminates this degree of freedom and restricts the solution space to the unique correct $u^*$.

\begin{theorem}[Uniqueness via the Instantaneous Velocity Constraint]
\label{theorem:uniqueness}
Let the \textbf{boundary error $\Delta_v$} be defined as the error at the boundary $t$, $\Delta_v(a(t), t) \triangleq u_\theta(a(t), t, t) - v^*(a(t), t)$. Minimizing the IVC loss, $\mathcal{L}_{\text{IVC}} = \mathbb{E}[\|\Delta_v\|^2]$, forces the integration constant $C(a, r)$ in Theorem~\ref{theorem:multiplicity} to zero. This eliminates the boundary error and ensures the cumulative error $\Delta_u$ vanishes for all $t < r$.
\end{theorem}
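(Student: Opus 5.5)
We plan to build directly on Theorem~\ref{theorem:multiplicity}, which says that any $u_\theta$ satisfying the mean flow identity exactly on $t<r$ has cumulative error $\Delta_u(a(t),t,r)=C(a,r)/(r-t)$. The strategy has three parts. First, relate the boundary error $\Delta_v$ to the constant $C(a,r)$ by letting $t\to r$. Second, show that driving $\mathcal{L}_{\text{IVC}}$ to zero forces $\Delta_v=0$. Third, conclude that $C(a,r)=0$, so $\Delta_u\equiv 0$ on $t<r$.

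\textbf{Step 1: relate the boundary to $C$.} By the definition in \eqref{eq:def_mean_vel}, the true field satisfies $u^*(a(t),t,t)=\lim_{r\to t}u^*(a(t),t,r)=v^*(a(t),t)$. This holds whenever $v^*$ is continuous in time, which we assume as a standing regularity hypothesis. We also assume that $u_\theta$ is continuous (and bounded on compact sets) up to the diagonal $r=t$, since it is a neural network evaluated at $r=t$. Under these assumptions,
\begin{equation*}
\Delta_v(a,r)=u_\theta(a,r,r)-v^*(a,r)=\lim_{t\to r^-}\Delta_u(a,t,r)=\lim_{t\to r^-}\frac{C(a,r)}{r-t}.
\end{equation*}
The left-hand side is finite, so the limit on the right must be finite, and this is only possible if $C(a,r)=0$. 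Read this way, the argument shows that any solution that is regular at the boundary already has $C=0$. The role of IVC is then to guarantee this boundary regularity, namely that $u_\theta(\cdot,t,t)$ equals the correct finite value $v^*$.

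\textbf{Step 2: IVC forces $\Delta_v=0$.} The minimizer of $\mathbb{E}\|u_\theta(a(t),t,t)-v\|^2$ over functions of $(a(t),t)$ is the conditional expectation $\mathbb{E}[v\mid a(t),t]=v^*(a(t),t)$. This is the standard flow-matching argument applied to the conditional target $v=a^*-a(0)$. Hence $\mathcal{L}_{\text{IVC}}$ attains its minimum exactly when $\Delta_v=0$ almost everywhere on the support of $a(t)$, and by continuity $\Delta_v=0$ everywhere on that support. The conditional-versus-marginal step is the usual bias–variance decomposition: the loss equals $\mathbb{E}\|\Delta_v\|^2$ plus a constant. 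This justifies writing $\mathcal{L}_{\text{IVC}}=\mathbb{E}\|\Delta_v\|^2$ up to an additive constant, as the statement does.

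\textbf{Step 3: conclude.} Combining the two steps, $(r-t)\Delta_u(a,t,r)=C(a,r)$ holds for all $t<r$. Taking $t\to r$ and using $\Delta_u\to\Delta_v=0$ gives $C(a,r)=0\cdot\lim\Delta_u=0$. Therefore $\Delta_u\equiv 0$ for all $t<r$, so $u_\theta=u^*$.

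\textbf{Main obstacle.} The delicate point is the limit interchange at the diagonal. We must argue that the constant $C(a,r)$, which Theorem~\ref{theorem:multiplicity} fixes along a trajectory $a(t)$ with $r$ held fixed, can be evaluated as $t\to r$. This requires $C$ to be constant along the characteristic $a(\cdot)$, including its endpoint, together with continuity of $u_\theta$ and $v^*$ there. We would state these as explicit mild regularity assumptions, namely joint continuity of $u_\theta$ in $(a,t,r)$ on $\{t\le r\}$ and continuity of the flow $a(\cdot)$. With those in place, the boundary value controls $C$ along the entire characteristic, and the argument closes.
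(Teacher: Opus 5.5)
Your proposal is correct and is essentially the paper's argument. Both identify the boundary error as the diagonal limit of $\Delta_u = C(a,r)/(r-t)$ and use finiteness of that boundary value to force $C(a,r)=0$, from which $\Delta_u\equiv 0$ and $\Delta_v=0$ follow. The differences are refinements only: you let $t\to r^-$ with $r$ fixed along the characteristic, which is cleaner than the paper's $r\to t^+$ because $C$ is constant in $t$ but not in $r$; you state explicitly the joint continuity that the paper uses implicitly when it attributes finiteness to a finite IVC loss; and your Step 2 adds the conditional-expectation justification of $\mathcal{L}_{\text{IVC}}=\mathbb{E}\|\Delta_v\|^2$ up to an additive constant.
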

\begin{proof}
The boundary error $\Delta_v$ is the limit of the cumulative error $\Delta_u$ as $r \to t^+$. Applying \eqref{equ:cumulative error}: 
\begin{equation}
\Delta_v(a(t), t) = \lim_{r \to t^+} \frac{C(a, r)}{r-t}. 
\end{equation}
This limit diverges if $C(a, r) \neq 0$. To keep the IVC loss finite, the optimization must prevent this divergence, which requires $C(a, r) = 0$. With $C(a, r)=0$, the boundary error $\Delta_v$ is zero. Consequently, by invoking Theorem~\ref{theorem:multiplicity}, the cumulative error $\Delta_u$ also becomes zero. This completes the proof.
\end{proof}
In essence, Theorem~\ref{theorem:uniqueness} proves that the IVC provides the necessary boundary condition to make the learning problem well-posed. By forcing the error constant to zero at the boundary, the IVC eliminates the cumulative error inherent to the mean flow objective, thereby retaining the high expressive power of the generative model. Moreover, the resulted smaller mean flow matching error $\epsilon_A$ in Theorem~\ref{theorem:policy_improvement}, which helps enable a more effective policy improvement with each update.

\subsection{Mean Flow Reinforcement Learning}
This section systematically presents the complete picture of our mean flow RL. 
The policy training loss $\mathcal{L}_{\text{policy}}$ combines the mean velocity model loss in \eqref{eq:l_mf} with the IVC loss in \eqref{eq:l_ivc}:
\begin{equation}
\mathcal{L}_{\text{policy}}(\theta) = \mathcal{L}_{\text{MF}}(\theta) + \lambda  \mathcal{L}_{\text{IVC}}(\theta),
    \label{eq:l_policy}
\end{equation}
where the balancing hyperparameter $\lambda > 0$ is called IVC coefficient, and the default value is 1.0. 

Concurrently, the critic $Q_\phi$ is trained to minimize a standard TD-error in \eqref{eq:l_critic} on transitions $(s_k, a_k, r_k, s_{k+1})$ from the replay buffer, where $k$ is the step index. 
\begin{equation}
    \mathcal{L}_Q(\phi) = \mathbb{E} \left[ \left( Q_\phi(s_k, a_k) - \left(r_k + \gamma Q_{\phi}(s_{k+1}, a_{k+1}^\star)\right) \right)^2 \right].
    \label{eq:l_critic}
\end{equation}
Recall that we treat the combined mean flow generation process in \eqref{eq:one_step_generation} and the best-of-$N$ mechanism in \eqref{eq:best_of_n} as a unified policy $\pi_\theta$, so the calculation of $a_{k+1}^\star$ also involves a generative-then-select process for ensuring an unbiased policy evaluation.

The overall training scheme decouples the generative policy's imitative training from the Q-value gradient. Instead, the policy improvement is guaranteed by a best-of-$N$ mechanism under the guidance of Q-value selection.
This design allows us to leverage the policy's full expressive power while ensuring stable and effective policy improvement. The complete algorithm is shown in Algorithm~\ref{algo:mfrl}.

\begin{algorithm}[H]
\caption{Mean Flow RL}
\label{algo:mfrl}
\begin{algorithmic}
\State \textbf{Input:} mean velocity policy $\pi_\theta$, where $\theta$ is the parameters of $u_\theta$, Critic $Q_\phi$, offline dataset $\mathcal{D}_\text{offline}$
\State Initialize replay buffer $\mathcal{D}$ with $\mathcal{D}_\text{offline}$
\State \Comment{\textbf{Phase 1: Offline Pre-training}}
\For{offline training step}
    \State Replay a mini-batch from $\mathcal{D}$
    \State Update policy $\pi_\theta$ by minimizing $\mathcal{L}_{\text{policy}}(\theta)$ with \eqref{eq:l_policy}
    \State Update critic $Q_\phi$ by minimizing $\mathcal{L}_Q(\phi)$ with \eqref{eq:l_critic}
\EndFor

\State \Comment{\textbf{Phase 2: Online Interaction and Fine-tuning}}
\For{online training step $k=1, 2, \dots$}
    \State Observe $s_k$, execute $a_k^*=\pi_\theta(s_k)$, receive $s_{k+1}, r_k$, and store $(s_k, a_k^\star, r_k, s_{k+1})$ into $\mathcal{D}$
    \State Replay a mini-batch from $\mathcal{D}$
    \State Update policy $\pi_\theta$ by minimizing $\mathcal{L}_{\text{policy}}(\theta)$ with \eqref{eq:l_policy}
    \State Update critic $Q_\phi$ by minimizing $\mathcal{L}_Q(\phi)$ with \eqref{eq:l_critic}
\EndFor
\end{algorithmic}
\end{algorithm}

\section{Experiments}

\subsection{Main Experiment}

\paragraph{Benchmark.}
We consider a total of 9 sparse-reward robotic manipulation tasks with varying difficulties. This includes 3 tasks from the Robomimic benchmark~\citep{robomimic2021}, \texttt{Lift}, \texttt{Can} and \texttt{Square}, and 6 tasks from OGBench~\citep{ogbench_park2024}, \texttt{cube-double-task 2/3/4} and \texttt{cube-triple-task 2/3/4}. For Robomimic, we use the multi-human datasets. For OGBench, we use the default play-style datasets.  See more details of these tasks in Appendix \ref{env_descrip}.

\paragraph{Baselines and our method.}
We compare with three latest strong offline-to-online RL baselines.
\textbf{(1) FQL (flow Q learning)~\citep{park2025flow}} first uses behavioral cloning to train a multi-step flow policy on offline data. It then trains a separate one-step policy that imitates the multi-step policy and maximizes Q-values, enabling efficient and stable learning within the data distribution.
\textbf{(2) BFN (best-of-$N$)~\citep{ghasemipour2021emaq}} combines the best-of-$N$ sampling with an expressive multi-step flow policy. 
Specifically, BFN first generates $N$ candidate actions and picks the action (out of $N$) that maximizes the current $Q$-value.
\textbf{(3) QC (Q-chunking)} ~\citep{li2025reinforcement} applies action chunking~\citep{bharadhwaj2024roboagent} on the basis of BFN to improve exploration and sample efficiency. \textbf{(4) Ours MVP} also adopts the chunking trick, but leverages a more efficient mean velocity policy to achieve the fastest one-step action generation with maintained or even enhanced expressiveness.



\begin{figure}[h]
\centering
\captionsetup[subfigure]{justification=centering}
\subfloat[Robomimic-lift\label{subFig:main_lift}]
{\includegraphics[width = 0.32\textwidth,trim={0cm 0.2cm 0.0cm 0.2cm}, clip]{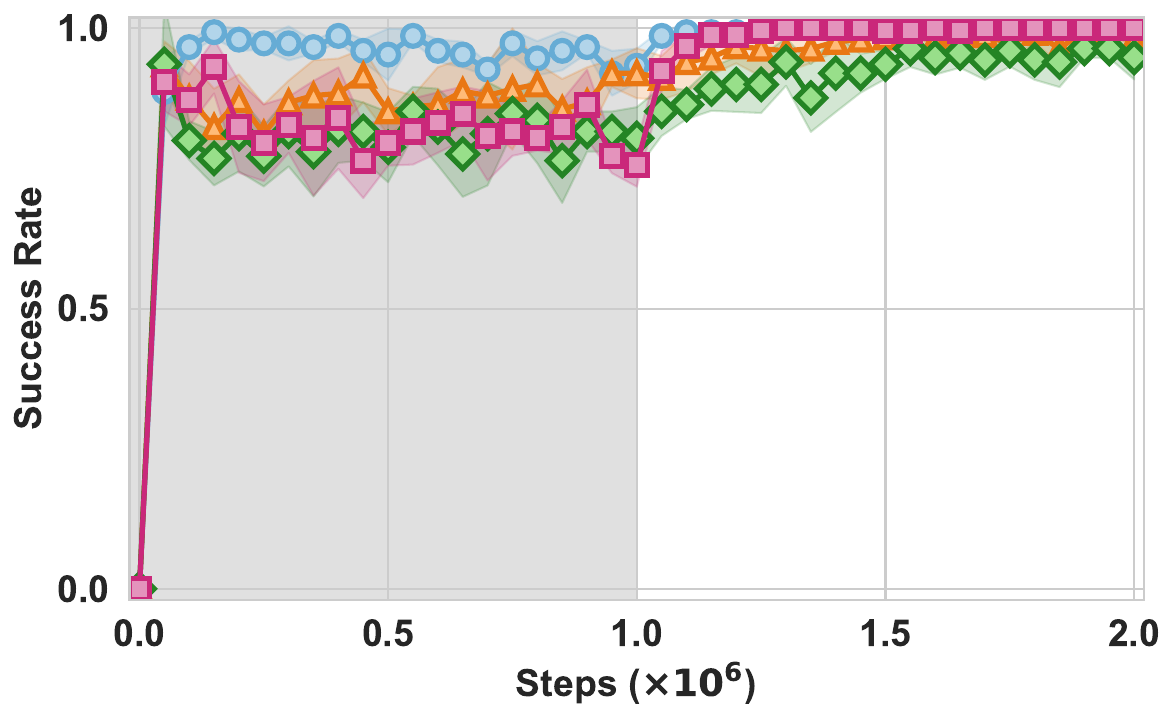}} \subfloat[Robomimic-can\label{subFig:main_can}]
{\includegraphics[width = 0.32\textwidth,trim={0cm 0.2cm 0.0cm 0.2cm}, clip]{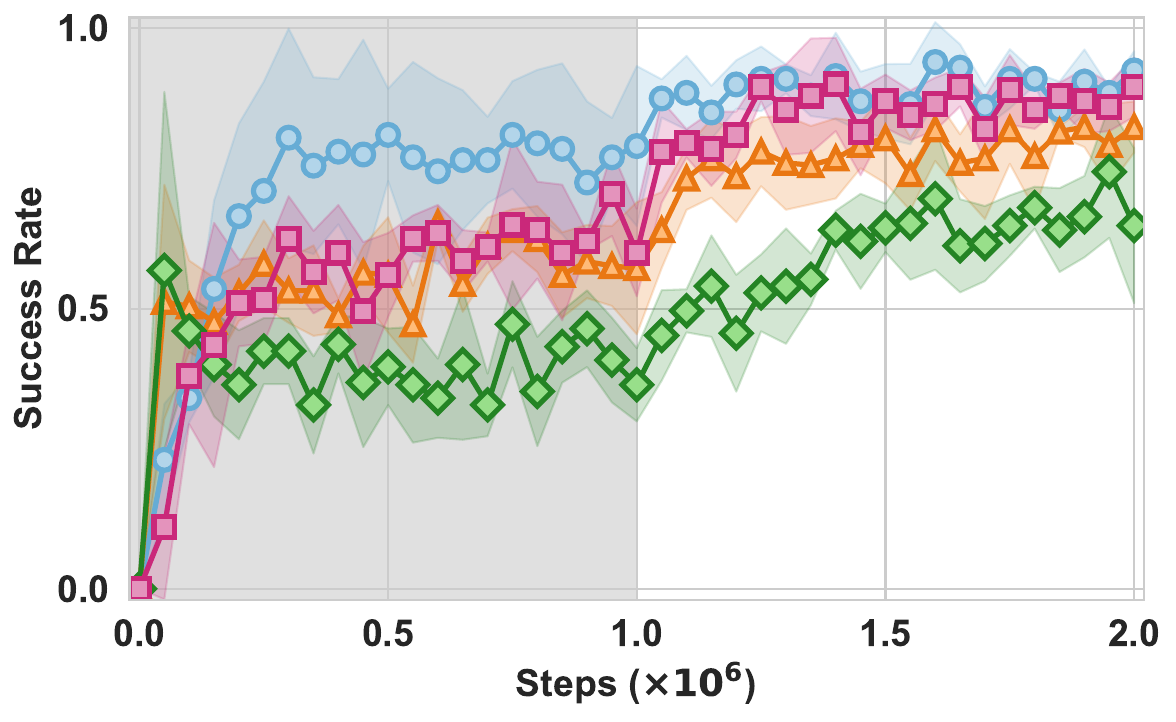}} 
\subfloat[Robomimic-square\label{subFig:main_square}]
{\includegraphics[width = 0.32\textwidth,trim={0cm 0.2cm 0.0cm 0.2cm}, clip]{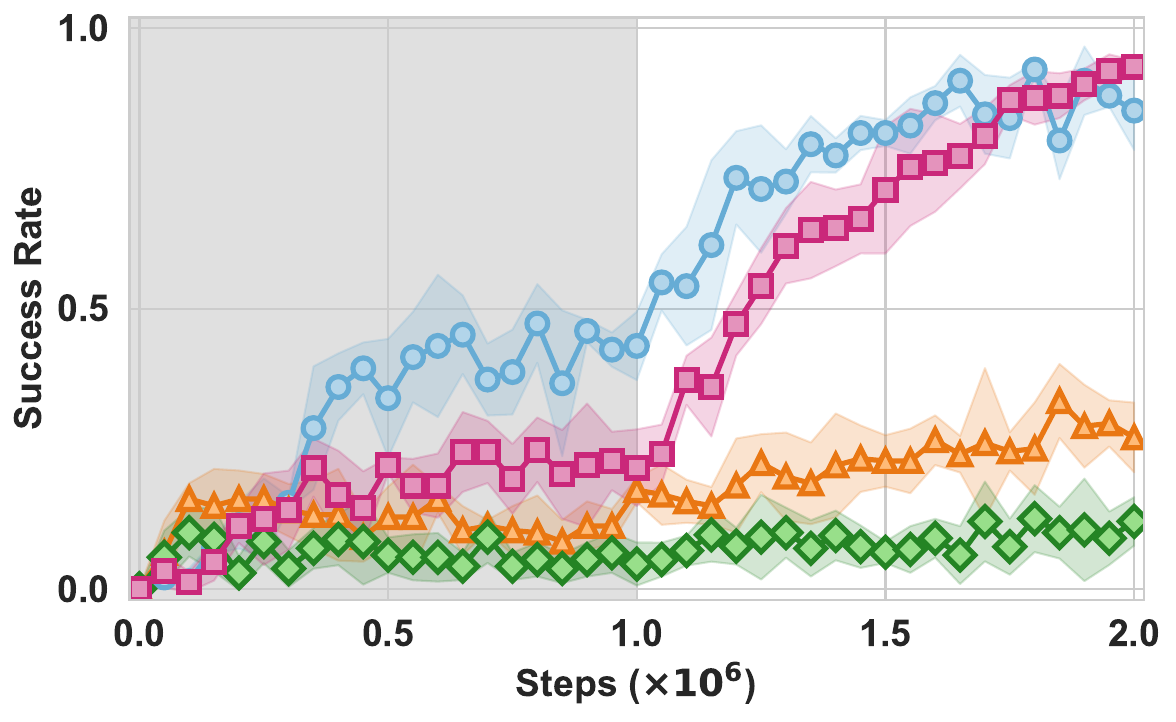}} \\
\subfloat[Cube-double-task2\label{subFig:main_double-2}]
{\includegraphics[width = 0.32\textwidth,trim={0cm 0.2cm 0.0cm 0.2cm}, clip]{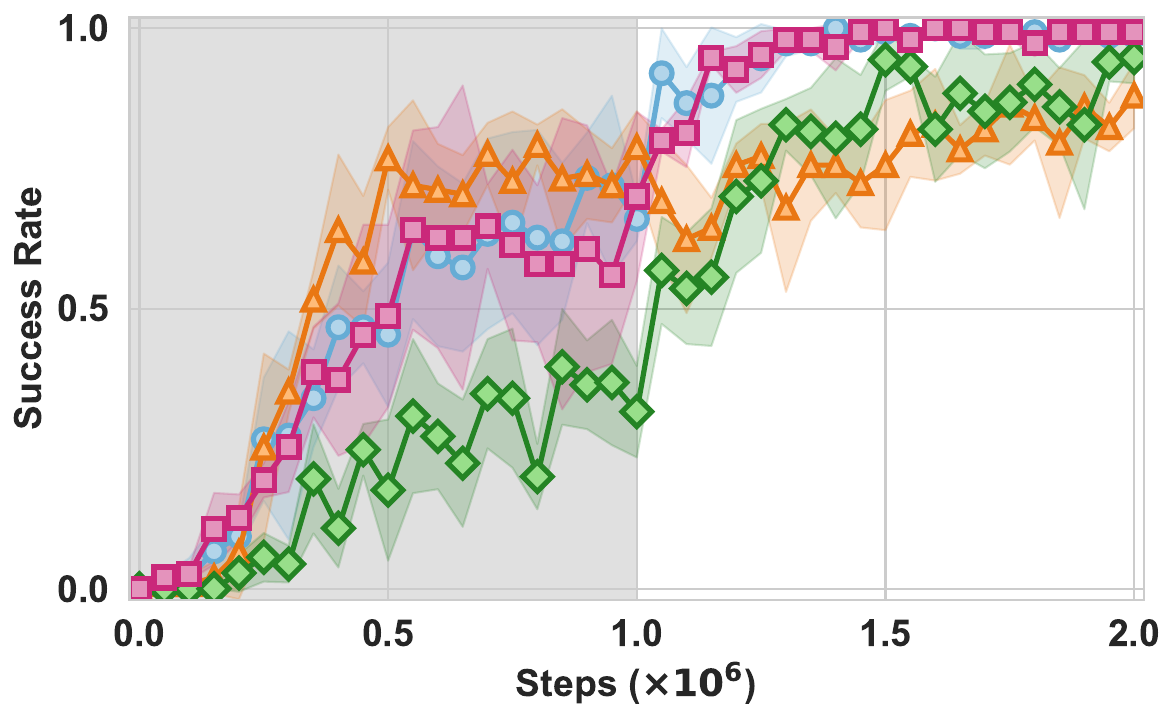}} 
\subfloat[Cube-double-task3\label{subFig:main_double-3}]
{\includegraphics[width = 0.32\textwidth,trim={0cm 0.2cm 0.0cm 0.2cm}, clip]{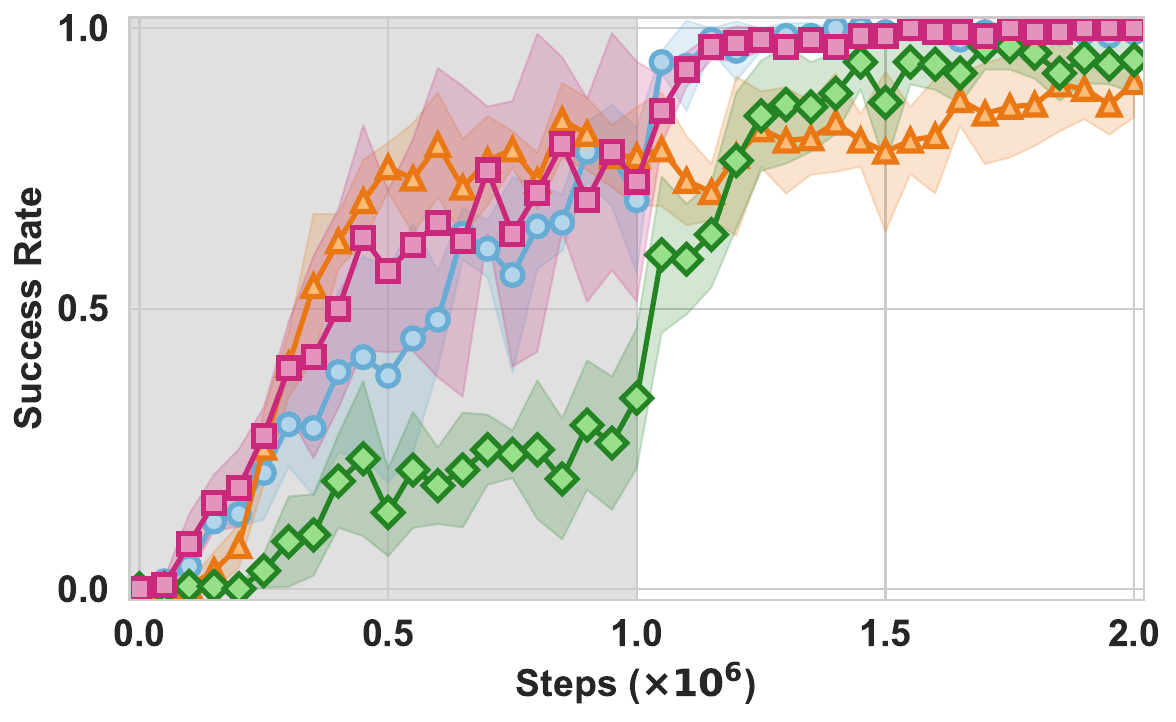}} 
\subfloat[Cube-double-task4\label{subFig:main_double-4}]
{\includegraphics[width = 0.32\textwidth,trim={0cm 0.2cm 0.0cm 0.2cm}, clip]{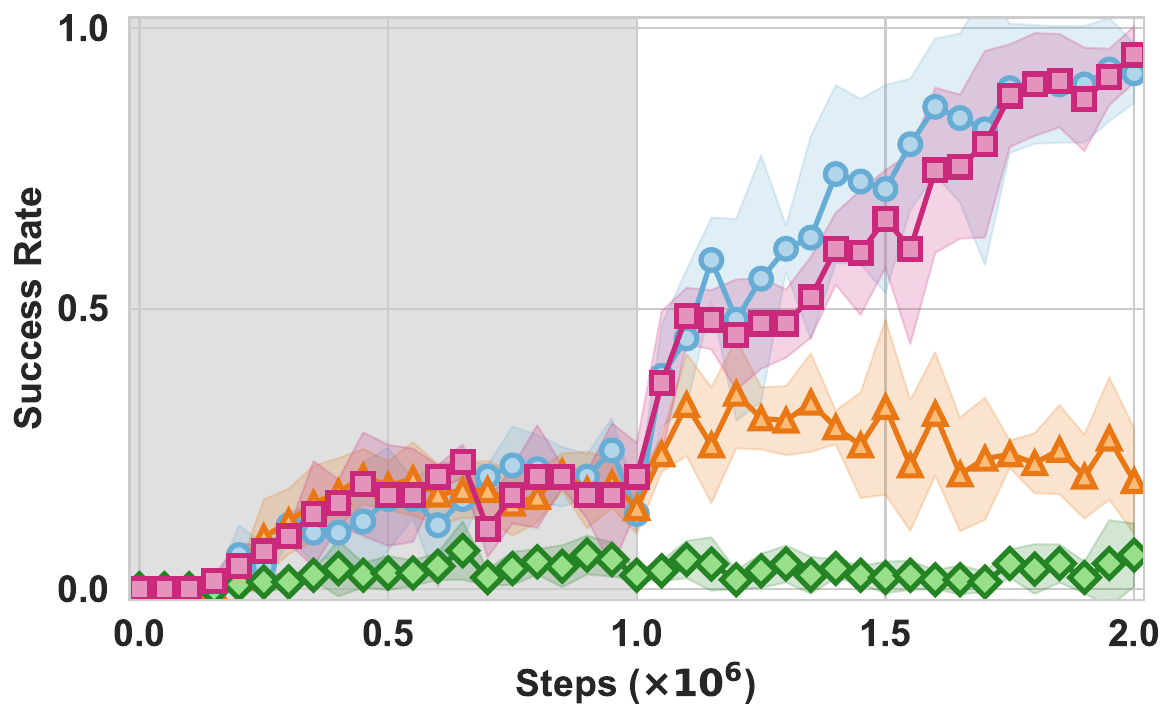}} \\
\subfloat[Cube-triple-task2\label{subFig:main_triple-2}]
{\includegraphics[width = 0.32\textwidth,trim={0cm 0.2cm 0.0cm 0.2cm}, clip]{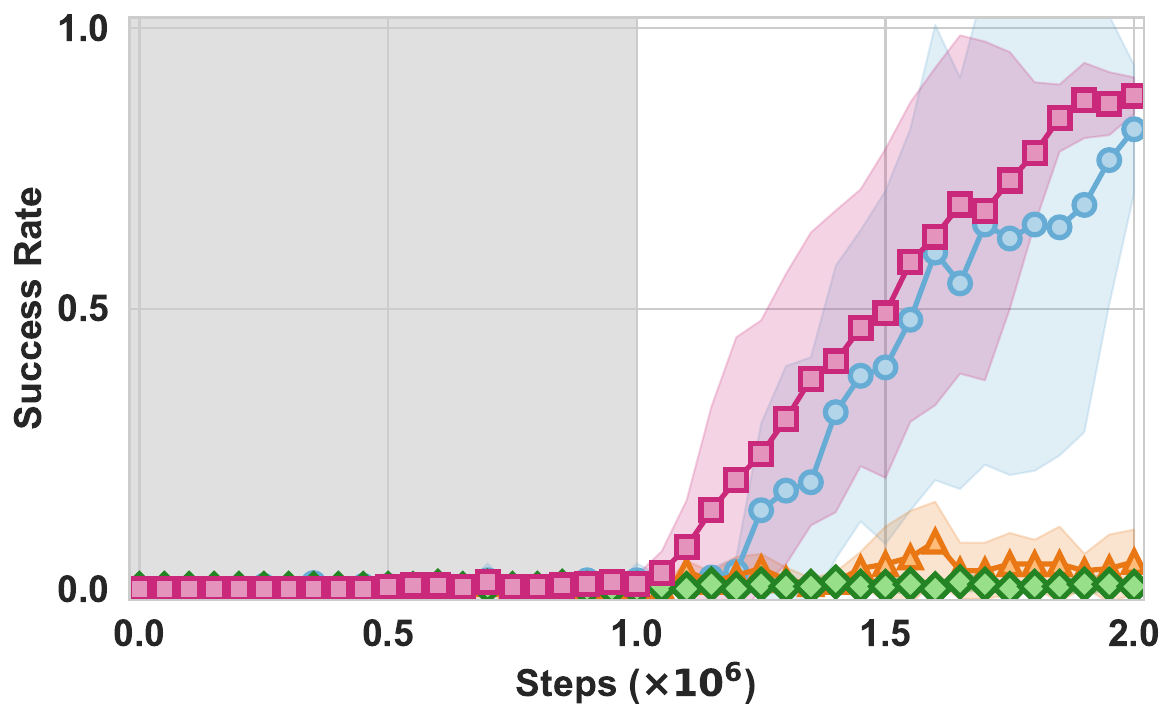}} 
\subfloat[Cube-triple-task3\label{subFig:main_triple-3}]
{\includegraphics[width = 0.32\textwidth,trim={0cm 0.2cm 0.0cm 0.2cm}, clip]{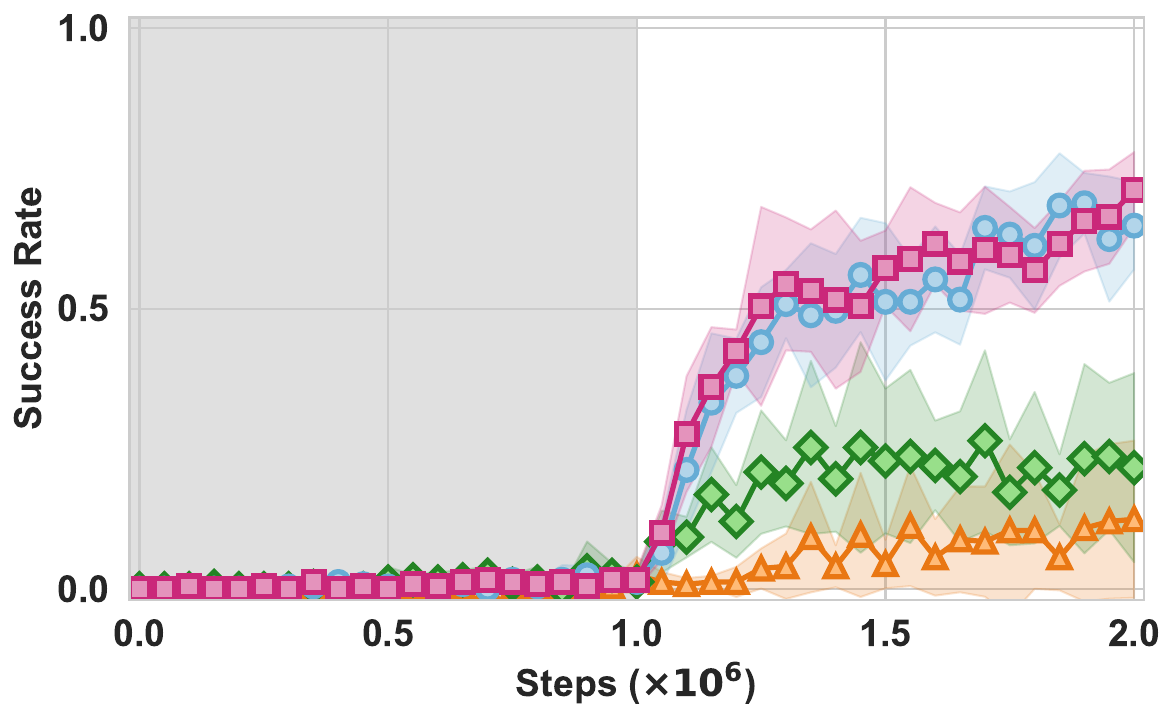}}
\subfloat[Cube-triple-task4\label{subFig:main_triple-4}]
{\includegraphics[width = 0.32\textwidth,trim={0cm 0.2cm 0.0cm 0.2cm}, clip]{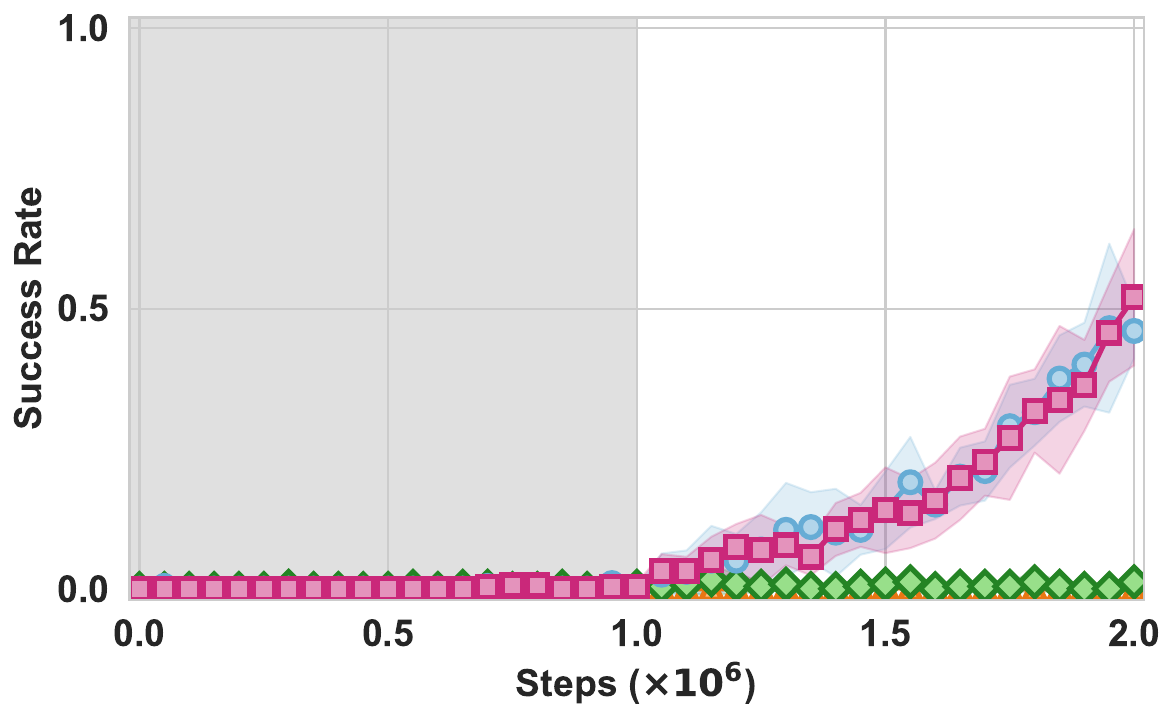}}
\\
{\includegraphics[width = 0.75\textwidth,trim={0.5cm 0.65cm 0.5cm 0.25cm}, clip]{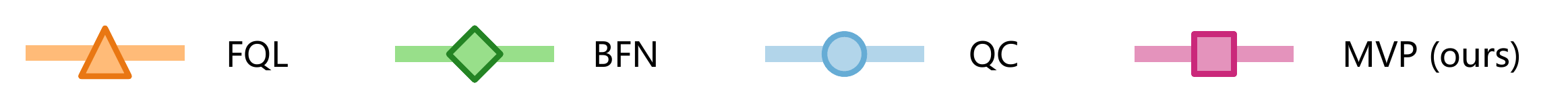}} 
\caption{\textbf{Training curves on benchmarks.} The solid lines correspond to mean and shaded regions
correspond to 95\% confidence interval over five runs. The shadow background indicates the offline training phase, while the white background indicates the online training phase.}
\label{f:benchmeark}
\end{figure}

\begin{table}[h]
\centering
\begin{threeparttable}
\caption{Success rates. Mean ± Std over 5 seeds. \textbf{Bold} = best, \underline{underlined} = 2nd-best.}
\label{tab:success_rates}
\begin{tabular}{lcccc}
\toprule
\textbf{Task} & \textbf{FQL} & \textbf{BFN} & \textbf{QC} & \textbf{MVP (ours)} \\
\midrule
Robomimic-lift & $0.96 \pm 0.03$ & \textbf{1.00 $\pm$ 0.01} & \textbf{1.00 $\pm$ 0.00} & \textbf{1.00 $\pm$ 0.00} \\
Robomimic-can & $0.74 \pm 0.11$ & $0.82 \pm 0.03$ & \textbf{0.94 $\pm$ 0.06} & \underline{0.92 $\pm$ 0.07} \\
Robomimic-square & $0.12 \pm 0.05$ & $0.34 \pm 0.06$ & \underline{0.92 $\pm$ 0.01} & \textbf{0.93 $\pm$ 0.01} \\
\midrule
Cube-double-task2 & $0.95 \pm 0.04$ & $0.88 \pm 0.05$ & \textbf{1.00 $\pm$ 0.00} & \textbf{1.00 $\pm$ 0.00} \\
Cube-double-task3 & $0.97 \pm 0.04$ & $0.90 \pm 0.06$ & \textbf{1.00 $\pm$ 0.00} & \textbf{1.00 $\pm$ 0.00} \\
Cube-double-task4 & $0.08 \pm 0.04$ & $0.35 \pm 0.09$ & \underline{0.93 $\pm$ 0.08} & \textbf{0.95 $\pm$ 0.04} \\
Cube-triple-task2 & $0.01 \pm 0.02$ & $0.08 \pm 0.06$ & \underline{0.82 $\pm$ 0.10} & \textbf{0.88 $\pm$ 0.03} \\
Cube-triple-task3 & $0.12 \pm 0.13$ & $0.26 \pm 0.14$ & \underline{0.69 $\pm$ 0.05} & \textbf{0.71 $\pm$ 0.06} \\
Cube-triple-task4 & $0.00 \pm 0.00$ & $0.02 \pm 0.02$ & \underline{0.46 $\pm$ 0.13} & \textbf{0.52 $\pm$ 0.11} \\
\midrule
Average & $0.44 \pm 0.05$ & $0.52 \pm 0.06$ & \underline{0.86 $\pm$ 0.05} & \textbf{0.88 $\pm$ 0.05} \\
\bottomrule
\end{tabular}
\end{threeparttable}
\end{table}

\paragraph{Main results.}
As shown in Table \ref{tab:success_rates}, our MVP matches or exceeds state-of-the-art multi-step flow-matching baselines on eight of nine tasks. On the remaining task, MVP ranks second, with a performance of 0.92, which is just 0.02 points below the top-performing baseline's score of 0.94.  While all methods achieve near-perfect performance on simpler tasks like Robomimic-lift, Cube-double-task2, and Cube-double-task3, MVP demonstrates clear superiority on the more challenging tasks. Specifically, MVP consistently outperforms all baselines on Robomimic-square, Cube-double-task4, and all Cube-triple tasks, where it consistently achieves the highest success rates. For instance, on the most difficult task, Cube-triple-task4, MVP achieves a success rate of 0.52 ± 0.11, which is significantly higher than the next-best baseline, QC (0.46 ± 0.13), and substantially exceeds both FQL and BFN. Overall, our MVP secures the top position with an average success rate of 0.88 ± 0.05. This result highlights its strong capability that is competitive with multi-step flow policies in solving long-horizon, sparse-reward tasks.






\subsection{Ablation Study and Supplementary Experiments}


\paragraph{(1) Ablation on the instantaneous velocity constraint (IVC).}
We perform an ablation study on the IVC coefficient $\lambda$. Our full version ($\lambda$ = 1.0) was compared against variants with a reduced constraint ($\lambda$  = 0.5) and without the constraint ($\lambda$ = 0.0). The results, as shown in Figure \ref{f:ablation1}, indicate a positive correlation between the IVC weight and performance, while also demonstrating that the method is not overly sensitive to $\lambda$. For example, the success rate on the challenging Cube-triple-task4 significantly increases from 0.30 ± 0.21 (with no IVC) to 0.45 ± 0.15 (with a partial IVC), and further to 0.52 ± 0.11 (with full IVC).  Detailed numerical results are listed in Table \ref{tab:success_rates_ablation1} in Appendix \ref{appen_num_ablation}. These findings empirically validate our theoretical claims, confirming the IVC's role as a crucial component for modeling an accurate mean velocity field and consequently achieving significant performance gains.

\begin{figure}[htbp]
\centering
\captionsetup[subfigure]{justification=centering}
\subfloat[Cube-triple-task3\label{subFig:ablation1_triple-3}]
{\includegraphics[width = 0.4\textwidth,trim={0cm 0.2cm 0.0cm 0.2cm}, clip]{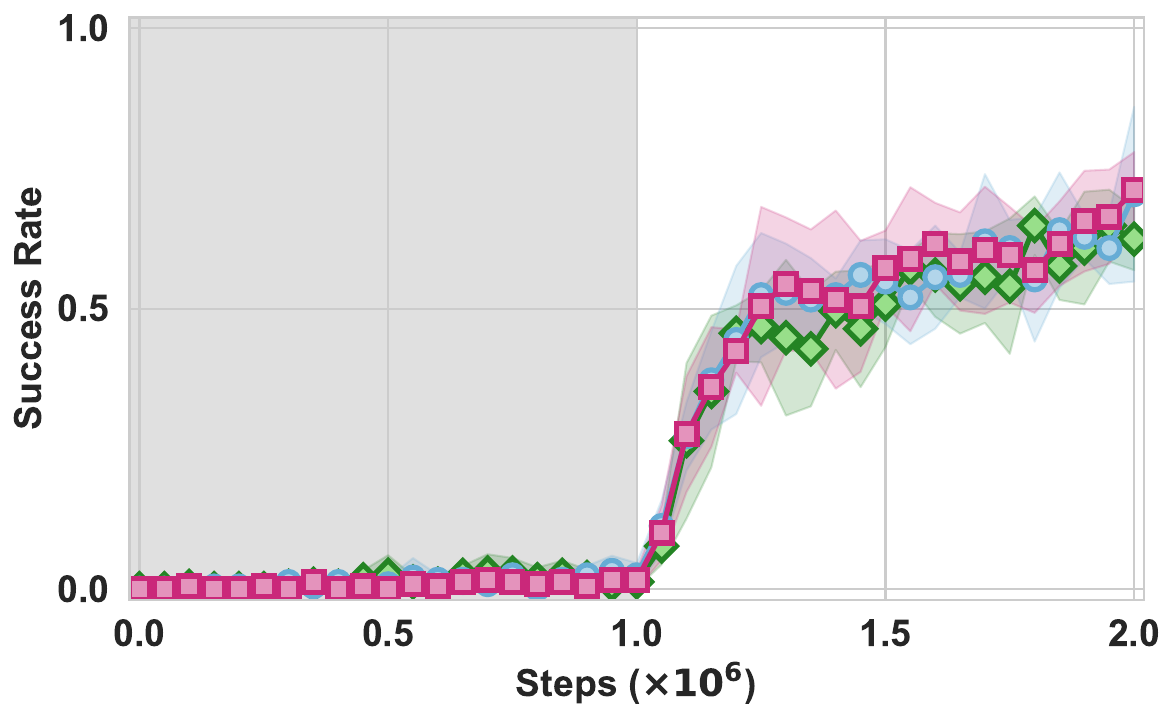}} 
\subfloat[Cube-triple-task4\label{subFig:ablation1_triple-4}]
{\includegraphics[width = 0.4\textwidth,trim={0cm 0.2cm 0.0cm 0.2cm}, clip]{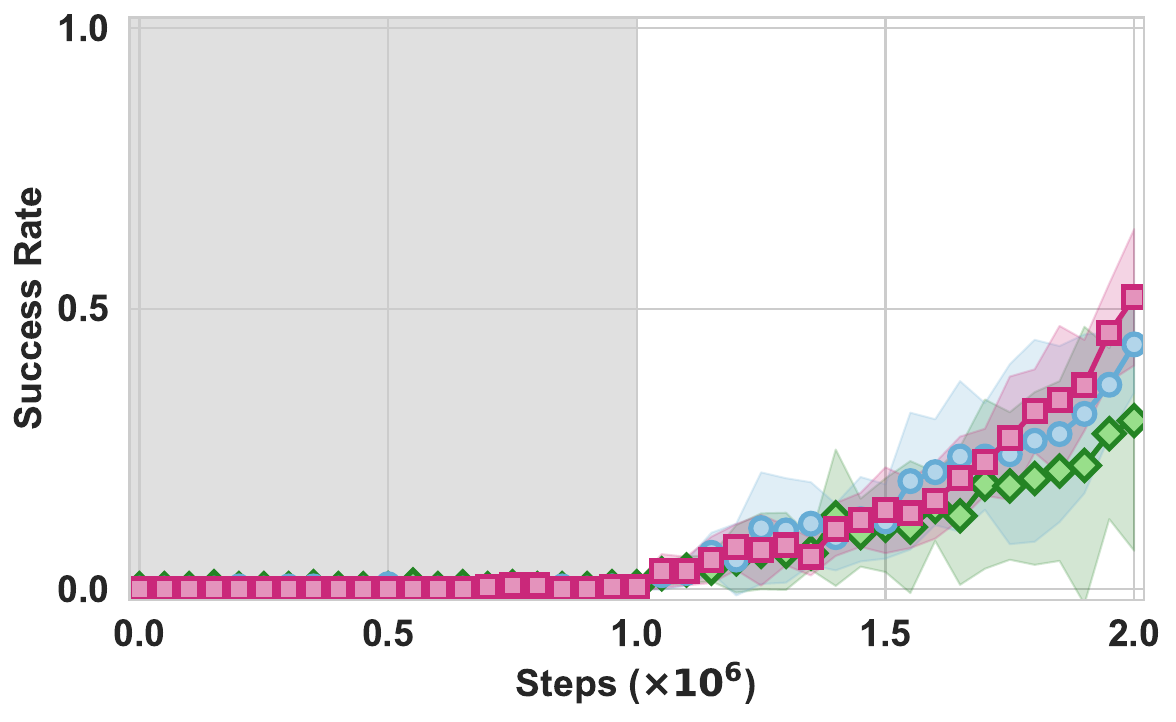}} 
\begin{minipage}[t]{0.12\textwidth}
        {\includegraphics[width =\textwidth,trim={0.0cm 0.0cm 0.0cm 1.4cm}, clip]{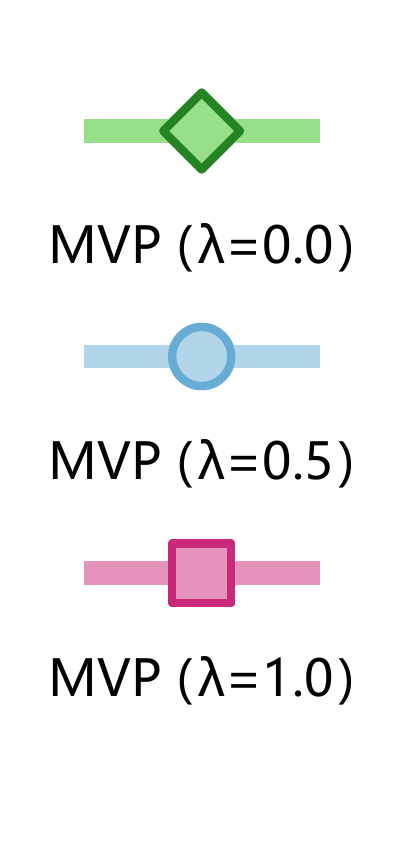}} 
        \label{fig:ablation1}
    \end{minipage}
\caption{\textbf{Training curves of ablation on the IVC.}}
\label{f:ablation1}
\end{figure}
\paragraph{(2) Comparison with one-step variants of the aforementioned baselines.}
We compared our MVP against one-step variants of the aforementioned baselines: FQL-Onestep, BFN-Onestep, and QC-Onestep. As shown in Figure \ref{f:ablation2}, a naive one-step configuration is insufficient for solving these complex, long-horizon tasks, with baselines achieving success rates near zero on both Cube-triple-task3 and Cube-triple-task4. In stark contrast, our MVP achieves success rates of 0.71 ± 0.06 and 0.52 ± 0.11 on these tasks, respectively. Detailed numerical results can be seen in Table \ref{tab:success_rates_ablation2} in Appendix \ref{appen_num_ablation}. This supplementary comparison highlights that simply using a one-step standard flow is not enough; the superior expressive capability and stable learning process of our mean velocity policy are critical for tackling these challenging long-horizon manipulation tasks.
\begin{figure}[htbp]
\centering
\captionsetup[subfigure]{justification=centering}
\subfloat[Cube-triple-task3\label{subFig:ablation2_triple-3ablation1_triple-3}]
{\includegraphics[width = 0.4\textwidth,trim={0cm 0.2cm 0.0cm 0.2cm}, clip]{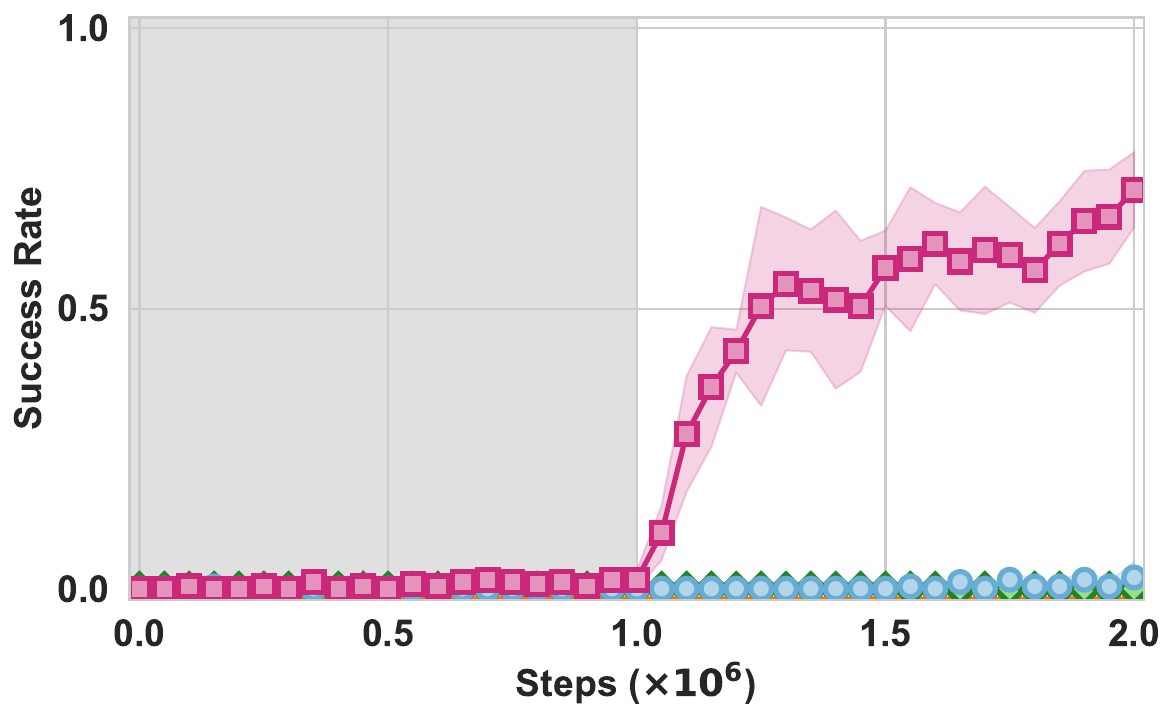}} 
\subfloat[Cube-triple-task4\label{subFig:ablation2_triple-4}]
{\includegraphics[width = 0.4\textwidth,trim={0cm 0.2cm 0.0cm 0.2cm}, clip]{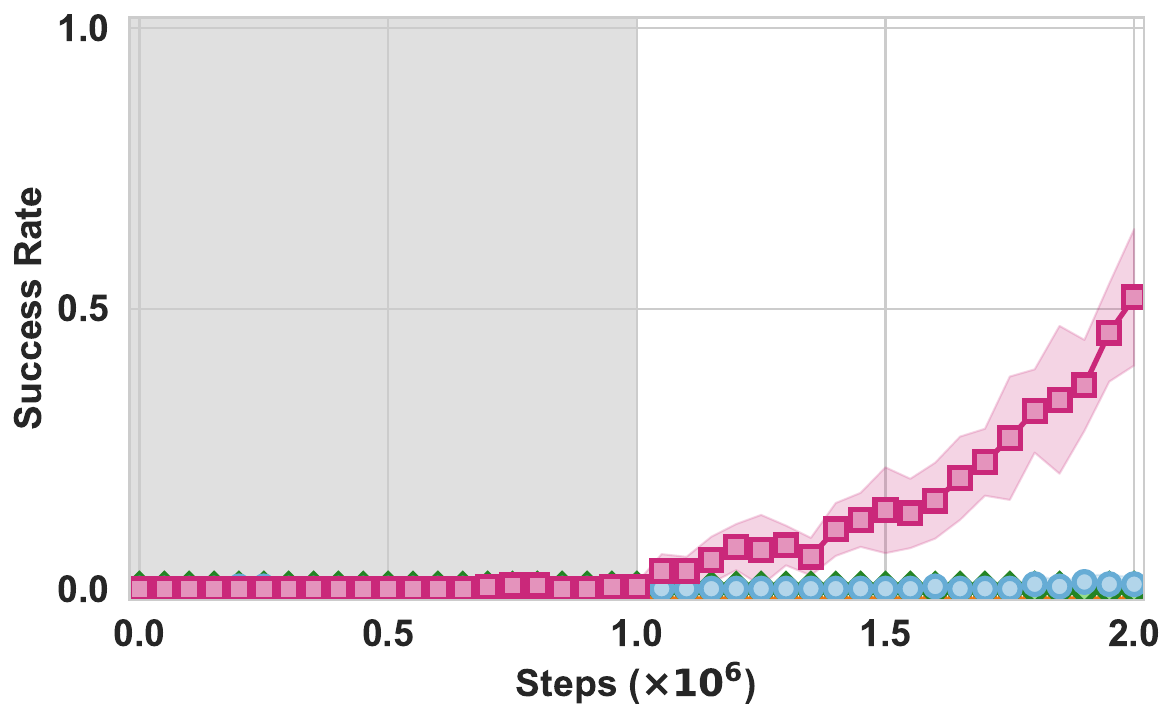}} 
\begin{minipage}[t]{0.12\textwidth}
        {\includegraphics[width =\textwidth,trim={0.0cm 0.0cm 0.0cm 0.6cm}, clip]{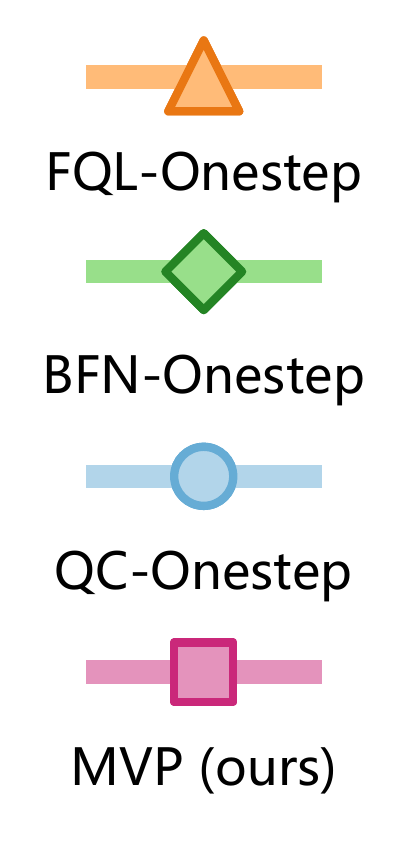}} 
        \label{fig:image2}
    \end{minipage}
\caption{\textbf{Training curves of comparison with one-step flow.}}
\label{f:ablation2}
\end{figure}
\paragraph{(3) Training and inference time analysis.}
{Figure \ref{fig:success_speed} presents a comparison of the average success rate (\%) versus online training speed (iter/s) across the all 9 tasks. Our MVP achieves highest success rate and fastest training speed. 
This superior training efficiency stems directly from our one-step action generation, which eliminates the expensive iterative sampling process required by prior multi-step flow policies.}

\begin{table}[htbp]
\centering
\caption{{Comparison of online training speed}}
\begin{tabular}{lcccc}
\toprule
\textbf{Metric} & \textbf{FQL} & \textbf{BFN} & \textbf{QC} & \textbf{MVP (ours)} \\
\midrule
Average & 108.5 $\pm$ 7.7 iter/s & 68.0 $\pm$ 5.8 iter/s & 92.6 $\pm$ 6.3 iter/s & 153.6 $\pm$ 11.5 iter/s\\
\bottomrule
\end{tabular}
\label{infer_time}
\end{table}

{Regarding inference time-efficiency, 
we conducted an evaluation with a focus on its suitability for real-world robotic deployment. Since robotic platforms often have limited computational resources, our experiments were conducted on a CPU-only environment, AMD Ryzen
Threadripper 3960X 24-Core Processor. To simulate a more realistic deployment scenario without hardware acceleration, we disabled JAX's Just-In-Time (JIT) compilation during all evaluations.}

{The results are listed in Table \ref{infer_time}. our MVP and FQL exhibit very similar inference times, with both approaches being significantly faster than BFN and QC.}
{\paragraph{Why BFN and QC are slow.} The poor performance of BFN and QC is primarily because they rely on a 10-step flow policy, which requires iterative computation to transform noise into an action. }
{\paragraph{Why FQL is fast.} FQL simultaneously learns both a 10-step flow policy for high-accuracy imitation and a separate one-step flow policy. The one-step policy is obtained through a distillation process combined with a Q-function maximization loss, which allows FQL to achieve a high inference speed comparable to our MVP.}
{\paragraph{Why MVP is better than FQL comprehensively.} Despite FQL's relatively fast inference, its training process is very slow due to the involvement of multiple policies, including a multi-step flow policy. Furthermore, benchmark results indicate that its success rate is very low, averaging only half of our MVP's. When considering FQL's overall low success rate and slow training speed, our MVP still maintains a significant advantage.}

\begin{table}[htbp]
\centering
\caption{{Comparison of inference time}}
\begin{tabular}{lcccc}
\toprule
\textbf{Metric} & \textbf{FQL} & \textbf{BFN} & \textbf{QC} & \textbf{MVP (ours)} \\
\midrule
Average & 10.76 $\pm$ 1.02 ms & 117.3 $\pm$ 13.23 ms & 113.22 $\pm$ 11.92 ms & 10.93 $\pm$ 0.95 ms\\
\bottomrule
\end{tabular}
\label{infer_time}
\end{table}


\section{Related work}

\textbf{Offline-to-online RL.}
The offline-to-online RL first uses a static, pre-collected dataset to offline train initial policy and Q-value functions~\citep{levine2020offline, kumar2020conservative}. This process provides a warm start, giving the agent a foundational understanding of the environment that significantly accelerates and improves the efficiency of subsequent online interactive fine-tuning~\citep{lee2022offline}.
Numerous algorithmic designs have been proposed to improve offline-to-online RL, including
behavioral regularization~\citep{ashvin2020accelerating, tarasov2023revisiting}, 
conservatism~\citep{kumar2020conservative},
in-sample maximization~\citep{kostrikovoffline, gargextreme}, 
out-of-distribution detection~\citep{yu2020mopo, kidambi2020morel, nikulin2023anti} and 
dual RL~\citep{lee2021optidice, sikchi2024dual}.
Beyond these algorithmic solutions, the choice of policy function also plays an important role~\citep{wangdiffusion}. A policy network with high expressiveness can better capture the intricate distribution of the behavioral policy during offline training stage. This capability is also crucial for adapting to target environments during online fine-tuning, as optimal actions in these settings often possess a naturally multi-modal nature~\citep{park2025flow, li2025reinforcement}.  Since the policy must infer actions at every step during both the online fine-tuning and real-time deployment stages, the inference time efficiency of the policy function is also critical~\citep{li2023rlbook, park2025flow}. Our work contributes a new policy function, MVP, which achieves the fastest single-step action generation and maintains a high expressive capacity to achieve high performance.

\textbf{Generative models as RL policies.}
The expressive power of generative models, such as denoising diffusion models~\citep{ho2020denoising} and flow matching~\citep{lipman2022flow}, makes them promising for representing complex, multi-modal policies in both offline~\citep{janner2022diffuser, chi2023diffusion} and online~\citep{yang2023policy,wang2024diffusion, ding2024diffusion} RL. However, their iterative sampling process requires a high number of function evaluations (NFE), creating a prohibitive latency for the high-throughput nature of online RL. A common approach in RL to improving time-efficiency of generative policies is distillation, which compresses a trained iterative model into a one-step policy~\citep{wang2024one, park2025flow}.
Beyond the field of RL, recent studies have begun to explore training a single-step generative flow directly for tasks like image generation~\citep{lusimplifying}. These methods typically operate either by enforcing consistency constraints on the model's outputs at different time steps~\citep{song2023consistency, songimproved, gengconsistency} or by explicitly modeling the flow velocity over a specific time interval~\citep{boffi2024flow, frans2024one}. The latter class of methods generally requires more iterations to train, but ultimately performs better~\citep{frans2024one}. A typical representative, mean flow, has achieved the best one-step fitting performance on Imagenet~\citep{geng2025mean}. We propose a new policy function, MVP, which combines mean flow with RL to enable the fastest single-step action generation. In the context of RL, the dynamic shifts in data distribution during sampling place higher demands on imitation-based flow matching training. To address this, our proposed IVC serves as an explicit boundary condition during MVP training, which reduces fitting error and consequently ensures strong policy expressiveness.

\section{Conclusion}
We propose the mean velocity policy (MVP), a new generative RL policy that enjoys both high time-efficiency and expressiveness.
The former stems from the fastest one-step action generation, and the latter contributes to our designed instantaneous velocity constraint (IVC), which explicitly serves as a necessary boundary condition and reliably improves the learning accuracy. Empirical results on the Robomimic and OGBench benchmarks confirm that our MVP achieves state-of-the-art success rates and offers substantial improvements in training and inference speed. We believe that this work represents a significant step towards developing highly expressive and efficient policy functions for complex robotic control tasks. 
Regarding limitation, a primary one is the additional GPU memory consumption during training, which stems from the Jacobian-Vector Product (JVP) operation. In future work, we plan to validate our method on more tasks and real robotic platforms.

\section{Acknowledgment}
This study is supported by the Tsinghua University-Toyota Joint Research Center for AI Technology of Automated Vehicle, Beijing Natural Science Foundation (L257002), and SunRisingAI Lab.

\bibliography{iclr2026_conference}
\bibliographystyle{iclr2026_conference}

\appendix
\newpage

\section{Theoretical analysis on the mean velocity policy improvement}
\label{sec:appendix_proof}
In this section, we provide the detailed theoretical analysis for the policy improvement guarantee of our training paradigm. It includes the implementation procedures of policy update detailed in \ref{A1}, core assumptions and useful lemma in \ref{A2}, the formal proof of the mean velocity policy improvement theorem in \ref{A3}, and three key properties of the improvement gain in \ref{A4}.


\subsection{Implementation procedures of policy update}
\label{A1}

Let $\pi_{\text{old}}$ denote the base mean velocity policy prior to an update. At the same time, we have a learned critic, $Q_\phi(s, a)$, which is an approximation of the true action-value function of the base policy, $Q^{\pi_{\text{old}}}(s, a)$. 
Let $\pi_{\text{new}}$ denote the updated policy, which is derived from $\pi_{\text{old}}$ through the following three-step generative process for any given state $s$:
\begin{enumerate}
    \item \textbf{Sample}: Generate $N$ candidate actions $\{a_1, \dots, a_N\}$ by sampling from the base policy, i.e., $a_i \sim \pi_{\text{old}}(\cdot|s)$.
    
    \item \textbf{Select}: Use the learned critic $Q_\phi$ to select the best action from the candidates, which becomes the target action $a^*(s)$:
    $$
    a^*(s) = \arg\max_{a_i} Q_\phi(s, a_i).
    $$
    
    \item \textbf{Match}: The new policy $\pi_{\text{new}}$ is obtained by training the parameters of $\pi_{\text{old}}$ to match the target action $a^*(s)$. Note that an action $a_{\text{new}} \sim \pi_{\text{new}}$ is not guaranteed to perfectly match $a^*(s)$. We formally bound the expected distance between them in Assumption~\ref{assumption:bound_match}.
\end{enumerate}

For analytical clarity, we introduce a conditional matching distribution, $M(\cdot | a^*)$, to model the outcome of the matching process (Step 3). Specifically, we assume the specific target $a^*$ has been selected in the previous Steps 1 and 2, then generation of the final action $a_{\text{new}}$ is expressed as
$$a_{\text{new}} \sim M(\cdot | a^*).$$
The difference between $M(\cdot | a^*)$ and $\pi_{\text{new}}$ is that $M(\cdot | a^*)$ describes the action distribution conditioned on a specific target action $a^*$, whereas $\pi_{\text{new}}$ represents the {marginal} distribution of the action, which results from averaging over all possible targets $a^*(s)$ generated in Steps 1 and 2.


\subsection{Core Assumptions and useful lemma}
\label{A2}
Our analysis relies on the following assumptions and lemma.

\begin{assumption}[Bounded Q-Value Fitting Error]
The error between the learned critic $Q_\phi$ and the true Q-function of the base policy $Q^{\pi_{\text{old}}}$ is uniformly bounded by a constant $\epsilon_Q \ge 0$.
$$
\forall (s, a) \in \mathcal{S} \times \mathcal{A}, \quad |Q_\phi(s, a) - Q^{\pi_{\text{old}}}(s, a)| \le \epsilon_Q.
$$
\label{assumption:bound_Q}
\end{assumption}
\begin{assumption}[Q-Value Smoothness]
The true Q-value of the base policy, $Q^{\pi_{\text{old}}}(s, a)$, is $L_Q$-Lipschitz continuous with respect to the action $a$.
$$
\forall s\in\mathcal{S}, \ a_1, a_2 \in \mathcal{A}, \quad |Q^{\pi_{\text{old}}}(s, a_1) - Q^{\pi_{\text{old}}}(s, a_2)| \le L_Q \cdot d(a_1, a_2).
$$
\label{assumption:smooth_Q}
\end{assumption}
\begin{assumption}[Bounded Mean Flow Matching Error]
The expected distance between the action $a_{\text{new}}$ generated by the new policy and the target action $a^*$ is bounded by a constant $\epsilon_A \ge 0$.
$$
\forall s \in \mathcal{S}, \quad \mathbb{E}_{a_{\text{new}} \sim M(\cdot | a^*)}[d(a_{\text{new}}, a^*(s))] \le \epsilon_A.
$$
\label{assumption:bound_match}
\end{assumption}
\begin{lemma}[Performance Difference Lemma~\citep{kakade2002approximately}]
\label{lemma_pd}
Let $\mathcal{M} = (\mathcal{S}, \mathcal{A}, P, R, \gamma)$ be a Markov Decision Process, and let $\pi$ and $\pi'$ be two arbitrary policies. The difference in the state-value function at a starting state $s$, or denoted as $s_0$, can be expressed in terms of the advantage function of the new policy $\pi'$ with respect to the old policy $\pi$ as:
$$
V^{\pi'}(s) - V^\pi(s) = \mathbb{E}_{s \sim d^{\pi'}, a \sim \pi'(\cdot|s)} \bigg[\sum_{t=0}^{\infty} \gamma^t A^\pi(s,a)\bigg].$$
where $d^{\pi'}(s)$ is the discounted state visitation distribution under policy $\pi'$, and $A^\pi(s,a) = Q^\pi(s,a) - V^\pi(s)$ is the advantage function of policy $\pi$.
\end{lemma}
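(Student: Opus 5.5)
We prove the lemma in its trajectory form,
\[
V^{\pi'}(s_0) - V^{\pi}(s_0) = \mathbb{E}_{\tau \sim \pi'}\Big[\sum_{t=0}^{\infty} \gamma^t A^{\pi}(s_t,a_t)\Big],
\]
where $\tau=(s_0,a_0,s_1,a_1,\dots)$ is generated by $a_t\sim\pi'(\cdot|s_t)$ and $s_{t+1}\sim P(\cdot|s_t,a_t)$. This is the reading used in Theorem~\ref{theorem:policy_improvement}. The form in the statement, with $s\sim d^{\pi'}$, is the same quantity once the sum over $t$ is absorbed into the discounted visitation measure. If $d^{\pi'}$ is normalized to a probability distribution, this gives a factor $1/(1-\gamma)$, which I will note at the end. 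The approach is the standard telescoping argument of \citet{kakade2002approximately}.

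\textbf{Step 1: rewrite the advantage.} By the Bellman equation for $\pi$, $Q^{\pi}(s_t,a_t) = \mathbb{E}_{s_{t+1}\sim P(\cdot|s_t,a_t)}\big[r(s_t,a_t)+\gamma V^{\pi}(s_{t+1})\big]$. Substituting into $A^{\pi}=Q^{\pi}-V^{\pi}$ and using the tower property along the trajectory of $\pi'$ gives
\[
\mathbb{E}_{\tau\sim\pi'}\Big[\sum_{t}\gamma^t A^{\pi}(s_t,a_t)\Big]=\mathbb{E}_{\tau\sim\pi'}\Big[\sum_{t}\gamma^t\big(r(s_t,a_t)+\gamma V^{\pi}(s_{t+1})-V^{\pi}(s_t)\big)\Big].
\]
The key point is that the conditional expectation over $s_{t+1}$ given $(s_t,a_t)$ is the same under $\pi'$ as under $\pi$, since both use the transition kernel $P$.

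\textbf{Step 2: telescope.} Split the right-hand side into two sums. The first, $\mathbb{E}_{\tau\sim\pi'}\big[\sum_t\gamma^t r(s_t,a_t)\big]$, equals $V^{\pi'}(s_0)$ by definition. The second is
\[
\sum_t\big(\gamma^{t+1}V^{\pi}(s_{t+1})-\gamma^t V^{\pi}(s_t)\big),
\]
whose partial sums up to $T$ collapse to $\gamma^{T+1}V^{\pi}(s_{T+1})-V^{\pi}(s_0)$. Adding the two pieces yields $V^{\pi'}(s_0)-V^{\pi}(s_0)$.

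\textbf{Step 3: justify the limits (main obstacle).} The only delicate step is exchanging expectation with the infinite sum and discarding the tail $\gamma^{T+1}V^{\pi}(s_{T+1})$. I will assume bounded rewards $|r|\le R_{\max}$, which is standard and holds in the sparse-reward tasks considered. Then $|V^{\pi}|\le R_{\max}/(1-\gamma)$, and every summand is bounded by a constant times $\gamma^t$. Dominated convergence therefore allows the interchange, and the tail term vanishes as $T\to\infty$ because $\gamma<1$.

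\textbf{Step 4: convert to the visitation form.} Define the unnormalized discounted visitation $d^{\pi'}(s)=\sum_t\gamma^t\Pr(s_t=s\mid\pi')$. The trajectory expectation then equals $\mathbb{E}_{s\sim d^{\pi'},\,a\sim\pi'(\cdot|s)}[A^{\pi}(s,a)]$, which is the statement's form. If $d^{\pi'}$ is normalized, the right-hand side carries an extra factor $1/(1-\gamma)$.
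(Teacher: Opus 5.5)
Your proof is correct and is essentially the paper's argument. The paper adds and subtracts $\sum_{a}\pi'(a|s)Q^{\pi}(s,a)$ and uses the Bellman equations to get the one-step recursion $V^{\pi'}(s)-V^{\pi}(s)=\mathbb{E}_{a\sim\pi'(\cdot|s)}[A^{\pi}(s,a)]+\gamma\,\mathbb{E}_{a\sim\pi'(\cdot|s),\,s'\sim P(\cdot|s,a)}[V^{\pi'}(s')-V^{\pi}(s')]$, then unrolls it; this is your telescoping sum organized from the left-hand side instead of the right, and your Step 3 supplies the limit interchange and vanishing tail that the paper's ``recursively unrolling'' leaves implicit.

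One refinement to Step 4: the identity holds as literally displayed only for the \emph{normalized} $d^{\pi'}$, as in your closing remark, not for the unnormalized one you call ``the statement's form''. Read literally, the inner sum is $\sum_{t}\gamma^{t}A^{\pi}(s,a)=A^{\pi}(s,a)/(1-\gamma)$, because $(s,a)$ does not depend on $t$, and this supplies exactly the $1/(1-\gamma)$ factor.
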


\begin{proof}
The proof relies on the key identity $V^\pi(s) = \sum_{a \in \mathcal{A}} \pi(a|s) Q^\pi(s,a)$. We start with the definition of the value difference for a state $s$:
\begin{align*}
V^{\pi'}(s) - V^\pi(s) &= V^{\pi'}(s) - \sum_{a \in \mathcal{A}} \pi'(a|s) Q^\pi(s,a) + \sum_{a \in \mathcal{A}} \pi'(a|s) Q^\pi(s,a) - V^\pi(s) \\
&= V^{\pi'}(s) - \sum_{a \in \mathcal{A}} \pi'(a|s) Q^\pi(s,a) + \sum_{a \in \mathcal{A}} \pi'(a|s) (Q^\pi(s,a) - V^\pi(s)) \\
&= V^{\pi'}(s) - \sum_{a \in \mathcal{A}} \pi'(a|s) Q^\pi(s,a) + \sum_{a \in \mathcal{A}} \pi'(a|s) A^\pi(s,a)
\end{align*}
Using the Bellman expectation equation for $V^{\pi'}(s)$:
\[V^{\pi'}(s) = \sum_{a' \in \mathcal{A}} \pi'(a'|s) \left( R(s,a') + \gamma \sum_{s' \in \mathcal{S}} P(s'|s,a') V^{\pi'}(s') \right)\]
And the definition of $Q^\pi(s,a)$:
\[Q^\pi(s,a) = R(s,a) + \gamma \sum_{s' \in \mathcal{S}} P(s'|s,a) V^\pi(s')\]
Substitute these into our first equation:
\begin{align*}
&V^{\pi'}(s) - V^\pi(s) \\
&= \sum_{a' \in \mathcal{A}} \pi'(a'|s) \left( R(s,a') + \gamma \sum_{s' \in \mathcal{S}} P(s'|s,a') V^{\pi'}(s') - Q^\pi(s,a') \right) + \sum_{a' \in \mathcal{A}} \pi'(a'|s) A^\pi(s,a') \\
&= \sum_{a' \in \mathcal{A}} \pi'(a'|s) \left( \gamma \sum_{s' \in \mathcal{S}} P(s'|s,a') (V^{\pi'}(s') - V^\pi(s')) \right) + \mathbb{E}_{a \sim \pi'(\cdot|s)} [A^\pi(s,a)] \\
&= \gamma \mathbb{E}_{a \sim \pi'(\cdot|s), s' \sim P(\cdot|s,a)} [V^{\pi'}(s') - V^\pi(s')] + \mathbb{E}_{a \sim \pi'(\cdot|s)} [A^\pi(s,a)]
\end{align*}
This recursive formula shows that the value difference at state $s$ depends on the expected value difference at the next state $s'$. By recursively unrolling this expression over time and averaging over the state visitation distribution $d^{\pi'}(s)$, we arrive at the final result.
$$
V^{\pi'}(s) - V^\pi(s) = \mathbb{E}_{s \sim d^{\pi'}, a \sim \pi'(\cdot|s)} \bigg[\sum_{t=0}^{\infty} \gamma^t A^\pi(s,a)\bigg].$$
This completes the proof.
\end{proof}

\subsection{Proof of the mean velocity policy improvement theorem}
\label{A3}

This subsection provides a formal proof of the mean velocity policy improvement theorem, that is, Theorem \ref{theorem:policy_improvement} in the main text.


\begin{proof}
Our proof begins by invoking the Performance Difference Lemma, i.e., Lemma \ref{lemma_pd}, which yields
\begin{equation}
 V^{\pi_{\text{new}}}(s) - V^{\pi_{\text{old}}}(s) = \mathbb{E}_{\tau \sim \pi_{\text{new}}}\left[ \sum_{t=0}^{\infty} \gamma^t A^{\pi_{\text{old}}}(s_t, a_t) \right].   
 \label{v_diff}
\end{equation}
This equation connects the improvement in value function to the expected advantage of the new policy's actions, measured with respect to the old policy.

The following proof proceeds in two parts: first, we derive the single-step advantage bound $\mathbb{E}_{a_t \sim \pi_{\text{new}}(\cdot|s_t)} [A^{\pi_{\text{old}}}(s_t, a_t)]$, and second, we aggregate them over time to obtain the required bound of value difference, which completes the proof.

\paragraph{Part 1: Lower bound the single-step expected advantage}
We begin by analyzing the Q-value decay due to the imperfect matching for a \textit{given} target action $a^*$. The final action $a_{\text{new}}$ is distributed according to $M(\cdot | a^*)$. We can bound the expected difference as follows:
\begin{align*}
Q^{\pi_{\text{old}}}(s, a^*) -& \mathbb{E}_{a_{\text{new}} \sim M(\cdot | a^*)}[Q^{\pi_{\text{old}}}(s, a_{\text{new}})] \\
{=}& \mathbb{E}_{a_{\text{new}}}[Q^{\pi_{\text{old}}}(s, a^*) - Q^{\pi_{\text{old}}}(s, a_{\text{new}})] \\
\le& \mathbb{E}_{a_{\text{new}}}[|Q^{\pi_{\text{old}}}(s, a^*) - Q^{\pi_{\text{old}}}(s, a_{\text{new}})|] \quad & \text{(by Jensen's inequality)} \\
\le& \mathbb{E}_{a_{\text{new}}}[L_Q \cdot d(a^*, a_{\text{new}})] \quad & \text{(by Q-Value Smoothness, Assumption.~\ref{assumption:smooth_Q})} \\
\le& L_Q \epsilon_A \quad & \text{(by Bounded Matching Error, Assumption~\ref{assumption:bound_match})}
\end{align*}
Rearranging gives us the following bound for a given $a^*$:
$$
\mathbb{E}_{a_{\text{new}} \sim M(\cdot | a^*)}[Q^{\pi_{\text{old}}}(s, a_{\text{new}})] \ge Q^{\pi_{\text{old}}}(s, a^*) - L_Q \epsilon_A.
$$
Now, we take the expectation over the distribution of target actions $a^*(s)$ to get the bound for the marginal policy $\pi_{\text{new}}$:
\begin{align*}
\mathbb{E}_{a \sim \pi_{\text{new}}}[Q^{\pi_{\text{old}}}(s, a)] &= \mathbb{E}_{\{a_i\} \sim \pi_{\text{old}}} \left[ \mathbb{E}_{a_{\text{new}} \sim M(\cdot | a^*(s))}[Q^{\pi_{\text{old}}}(s, a_{\text{new}})] \right] \\
&\ge \mathbb{E}_{\{a_i\} \sim \pi_{\text{old}}} [Q^{\pi_{\text{old}}}(s, a^*)] - L_Q \epsilon_A.
\end{align*}
Next, we relate the Q-value of the target action, $Q^{\pi_{\text{old}}}(s, a^*)$, back to the Q-values of the original candidates $\{a_i\}$:
\begin{align*}
Q^{\pi_{\text{old}}}(s, a^*) &\ge Q_\phi(s, a^*) - \epsilon_Q \quad & \text{(by Bounded Q-Value Fitting Error, Assumption~\ref{assumption:bound_Q})} \\
&= \max_{i=1, \dots, N} Q_\phi(s, a_i) - \epsilon_Q \quad & \text{(by definition of $a^*$)} \\
&\ge \max_{i=1, \dots, N} Q^{\pi_{\text{old}}}(s, a_i) - 2\epsilon_Q.  \quad & \text{(by Assumption~\ref{assumption:bound_Q} on each candidate $a_i$)} 
\end{align*}
Substituting this result back into our bound for $\mathbb{E}_{a \sim \pi_{\text{new}}}[Q^{\pi_{\text{old}}}(s, a)]$, we get:
$$
\mathbb{E}_{a \sim \pi_{\text{new}}}[Q^{\pi_{\text{old}}}(s, a)] \ge \mathbb{E}_{\{a_i\} \sim \pi_{\text{old}}} \left[ \max_{i=1, \dots, N} Q^{\pi_{\text{old}}}(s, a_i) \right] - 2\epsilon_Q - L_Q \epsilon_A.
$$
Finally, subtracting $V^{\pi_{\text{old}}}(s) = \mathbb{E}_{a \sim \pi_{\text{old}}}[Q^{\pi_{\text{old}}}(s, a)]$ from both sides gives the desired single-step advantage bound:
$$
\mathbb{E}_{a \sim \pi_{\text{new}}}[A^{\pi_{\text{old}}}(s, a)] \ge \Delta_N^{\pi_{\text{old}}}(s) - 2\epsilon_Q - L_Q \epsilon_A,
$$
where $\Delta_N^{\pi_{\text{old}}}(s)$ is the \textbf{best-of-$N$ advantage gain}, defined as
$$
    \Delta_N^{\pi_{\text{old}}}(s) := \mathbb{E}_{a_1, \dots, a_N \sim \pi_{\text{old}}(\cdot|s)}\left[\max_{i=1, \dots, N} Q^{\pi_{\text{old}}}(s, a_i)\right] - V^{\pi_{\text{old}}}(s).
$$
\paragraph{Part 2: Extension to the value function difference}
With the single-step advantage bound established, we substitute it into \eqref{v_diff} and obtain:
\begin{align*}
V^{\pi_{\text{new}}}(s) - &V^{\pi_{\text{old}}}(s) \\
=& \mathbb{E}_{\tau \sim \pi_{\text{new}}}\left[ \sum_{t=0}^{\infty} \gamma^t A^{\pi_{\text{old}}}(s_t, a_t) \right] \\
=& \sum_{t=0}^{\infty} \gamma^t \mathbb{E}_{s_t \sim d_{s, \pi_{\text{new}}}^t} \left[ \mathbb{E}_{a_t \sim \pi_{\text{new}}(\cdot|s_t)} [A^{\pi_{\text{old}}}(s_t, a_t)] \right]   & \text{(by law of total expectation)}\\
\ge& \sum_{t=0}^{\infty} \gamma^t \mathbb{E}_{s_t \sim d_{s, \pi_{\text{new}}}^t}\left[ \Delta_N^{\pi_{\text{old}}}(s_t) - 2\epsilon_Q - L_Q \epsilon_A \right]  & \text{(by substituting the bound from Part 1)}\\
=& \mathbb{E}_{\tau \sim \pi_{\text{new}}}\left[ \sum_{t=0}^{\infty} \gamma^t \Delta_N^{\pi_{\text{old}}}(s_t) \right] - \frac{2\epsilon_Q + L_Q \epsilon_A}{1-\gamma}. & \text{(by rearranging the geometric series)}
\end{align*}
This completes the proof.
\end{proof}

\subsection{Key Properties of the best-of-$N$ Advantage Gain}
\label{A4}
\label{sec:appendix_non_negative_gain}
The term $\Delta_N^{\pi_{\text{old}}}(s)$ has several important properties:
\begin{enumerate}
    \item \textbf{Non-negativity}: $\Delta_N^{\pi_{\text{old}}}(s) \ge 0$.
    \item \textbf{Monotonicity with $N$}: $\Delta_{N+1}^{\pi_{\text{old}}}(s) \ge \Delta_N^{\pi_{\text{old}}}(s)$.
    \item \textbf{Special case $(N=1)$}: $\Delta_1^{\pi_{\text{old}}}(s) = 0$.
\end{enumerate}

\begin{proof}
Our proof sketch is to first rewrite $\Delta_N^{\pi_{\text{old}}}(s)$ as an integral involving the cumulative distribution function (CDF) of the Q-values, and then demonstrate the  claimed properties of the rewritten form.

Let us define a random variable $X$ representing the candidate Q-values. For a given
state $s$, the randomness of $X$ is induced by the candidate action $a$ independently sampled from the base policy:
$$ 
X = Q^{\pi_{\text{old}}}(s, a), \quad \text{where } a \sim \pi_{\text{old}}(\cdot|s). 
$$
Recall that the definition of the   \textbf{best-of-$N$ advantage gain} is
$$\Delta_N^{\pi_{\text{old}}}(s) := \mathbb{E}_{a_1, \dots, a_N \sim \pi_{\text{old}}(\cdot|s)}\left[\max_{i=1, \dots, N} Q^{\pi_{\text{old}}}(s, a_i)\right] - V^{\pi_{\text{old}}}(s).$$
We can equivalently describe this definition using the CDF notation of the random variable $X$.
Let $F_X(x)$ be the CDF of $X$, and $X_1, \dots, X_N$ be $N$ independent and identically distributed (i.i.d.) samples of $X$, and let $Y_N = \max(X_1, \dots, X_N)$. The CDF of $Y_N$ is $F_{Y_N}(y) = [F_X(y)]^N$. Therefore, we have $\Delta_N^{\pi_{\text{old}}}(s) = \mathbb{E}[Y_N] - \mathbb{E}[X]$. 

Moving forward, since the expectation of a random variable $X$ in terms of its CDF satisfies $\mathbb{E}[X] = \int_{0}^{\infty} (1 - F_X(x)) dx - \int_{-\infty}^{0} F_X(x) dx$, we can further rewrite $\Delta_N^{\pi_{\text{old}}}(s)$ as:
\begin{align*}
\Delta_N^{\pi_{\text{old}}}(s) &= \left( \int_{0}^{\infty} (1 - F_{Y_N}(x)) dx - \int_{-\infty}^{0} F_{Y_N}(x) dx \right) - \left( \int_{0}^{\infty} (1 - F_X(x)) dx - \int_{-\infty}^{0} F_X(x) dx \right) \\
&= \int_{-\infty}^{\infty} (F_X(x) - F_{Y_N}(x)) dx.
\end{align*}
Substituting $F_{Y_N}(x) = [F_X(x)]^N$, we arrive at the final rewritten form:
$$
\Delta_N^{\pi_{\text{old}}}(s) = \int_{-\infty}^{\infty} (F_X(x) - [F_X(x)]^N) dx.
$$
Next we use this form to prove the  three key properties of $\Delta_N^{\pi_{\text{old}}}(s)$.

\textbf{1. Proof of non-negativity ($\Delta_N^{\pi_{\text{old}}}(s) \ge 0$)}

The CDF $F_X(x)$ takes values in the range $[0, 1]$. For any value $p \in [0, 1]$ and any integer $N \ge 1$, we have $p^N \le p$. Therefore, the integrand $F_X(x) - [F_X(x)]^N$ is always greater than or equal to zero for all $x$. The integral of a non-negative function is non-negative. Thus, $\Delta_N^{\pi_{\text{old}}}(s) \ge 0$.

\textbf{2. Proof of monotonicity with $N$ ($\Delta_{N+1}^{\pi_{\text{old}}}(s) \ge \Delta_N^{\pi_{\text{old}}}(s)$)}

We examine the difference between consecutive terms using the integral form:
\begin{align*}
\Delta_{N+1}^{\pi_{\text{old}}}(s) - \Delta_N^{\pi_{\text{old}}}(s) &= \int_{-\infty}^{\infty} (F_X(x) - [F_X(x)]^{N+1}) dx - \int_{-\infty}^{\infty} (F_X(x) - [F_X(x)]^N) dx \\
&= \int_{-\infty}^{\infty} ([F_X(x)]^N - [F_X(x)]^{N+1}) dx \\
&= \int_{-\infty}^{\infty} [F_X(x)]^N (1 - F_X(x)) dx.
\end{align*}
Since $F_X(x) \in [0, 1]$, both terms in the integrand, $[F_X(x)]^N$ and $(1 - F_X(x))$, are non-negative. Their product is therefore non-negative. The integral of this non-negative function is non-negative, which implies $\Delta_{N+1}^{\pi_{\text{old}}}(s) - \Delta_N^{\pi_{\text{old}}}(s) \ge 0$.

\textbf{3. Proof of special case ($\Delta_1^{\pi_{\text{old}}}(s) = 0$)}

Substituting $N=1$ into the integral identity yields:
$$
\Delta_1^{\pi_{\text{old}}}(s) = \int_{-\infty}^{\infty} (F_X(x) - [F_X(x)]^1) dx = \int_{-\infty}^{\infty} (F_X(x) - F_X(x)) dx = \int_{-\infty}^{\infty} 0 \, dx = 0.
$$
This confirms the special case.
\end{proof}


\section{Supplementary results}

\subsection{Numerical results of ablation study}
\label{appen_num_ablation}
\begin{table}[htbp]
\centering
\caption{Ablation on the impact of IVC.}
\label{tab:success_rates_ablation1}
\begin{tabular}{lcccc}
\toprule
\textbf{Task} &\textbf{MVP} ($\lambda=0.0$) &\textbf{MVP} ($\lambda=0.5$) &\underline{\textbf{MVP} ($\lambda=1.0$)} \\
\midrule
Cube-triple-task3 & $0.65 \pm 0.05$ & $0.70 \pm 0.14$ & \mathbf{$0.71 \pm 0.06$}  \\
Cube-triple-task4 & $0.30 \pm 0.21$ & $0.44 \pm 0.08$ & \mathbf{$0.52 \pm 0.11$}  \\
\bottomrule
\end{tabular}
\end{table}

\begin{table}[htbp]
\centering
\caption{Comparison with one-step variants of the aforementioned baselines.}
\label{tab:success_rates_ablation2}
\begin{tabular}{lcccc}
\toprule
\textbf{Task} & \textbf{FQL-Onestep} & \textbf{BFN-Onestep} & \textbf{QC-Onestep} & \textbf{MVP (ours)} \\
\midrule
Cube-triple-task3 & $0.00 \pm 0.01$ & $0.00 \pm 0.00$  & $0.02 \pm 0.03$ & \mathbf{$0.71 \pm 0.06$}\\
Cube-triple-task4 & $0.00 \pm 0.00$ & $0.00 \pm 0.00$  & $0.01 \pm 0.01$ & \mathbf{$0.52 \pm 0.11$}\\
\bottomrule
\end{tabular}
\end{table}

\section{Environments Description}
\label{env_descrip}

\begin{figure}[h]
    \centering
\subfloat[Lift\label{subFig:lift}]
{\includegraphics[width = 0.3\textwidth]{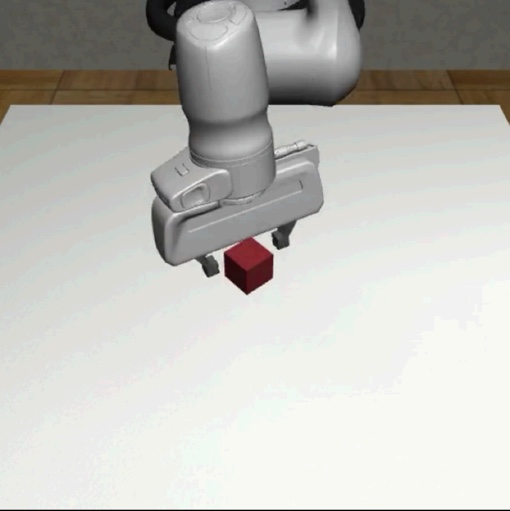}} 
\hspace{5pt}
\subfloat[Can\label{subFig:can}]
{\includegraphics[width = 0.3\textwidth]{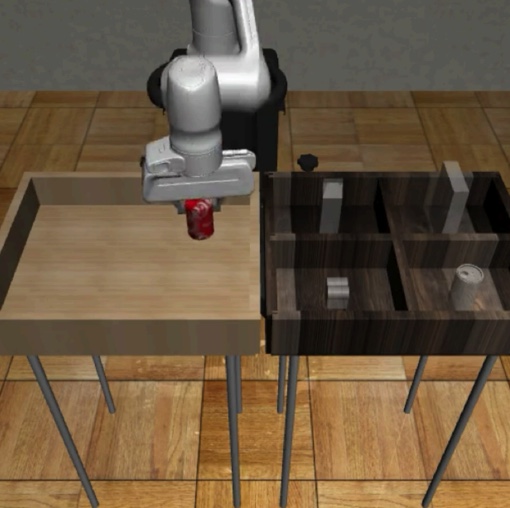}} 
\hspace{5pt}
\subfloat[Square\label{subFig:square}]
{\includegraphics[width = 0.3\textwidth]{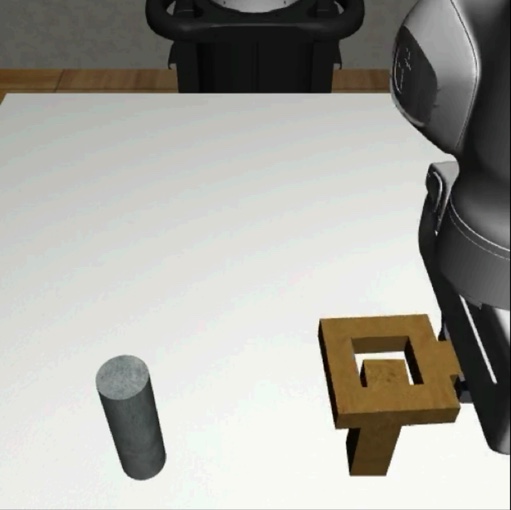}} \\
\subfloat[Cube-double-task2\label{subFig:double-task2}]
{\includegraphics[width = 0.3\textwidth]{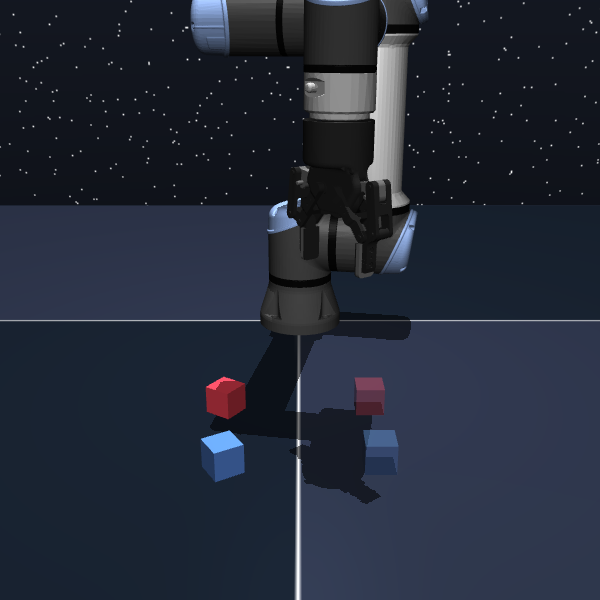}} 
\hspace{5pt}
\subfloat[Cube-double-task3\label{subFig:double-task3}]
{\includegraphics[width = 0.3\textwidth]{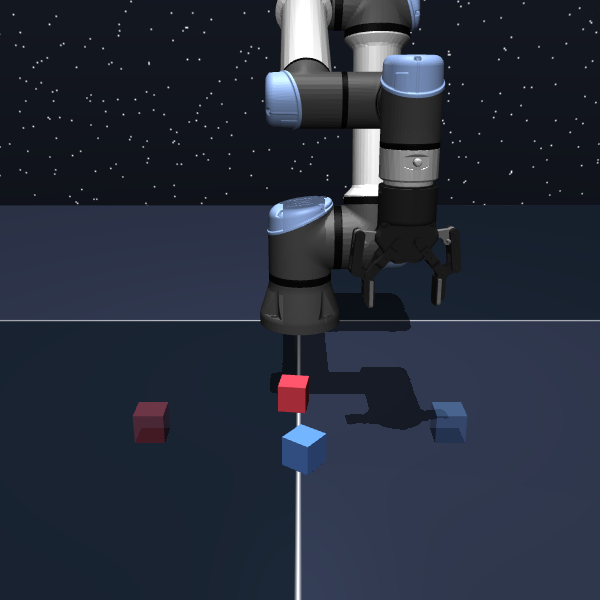}} 
\hspace{5pt}
\subfloat[Cube-double-task4\label{subFig:double-task4}]
{\includegraphics[width = 0.3\textwidth]{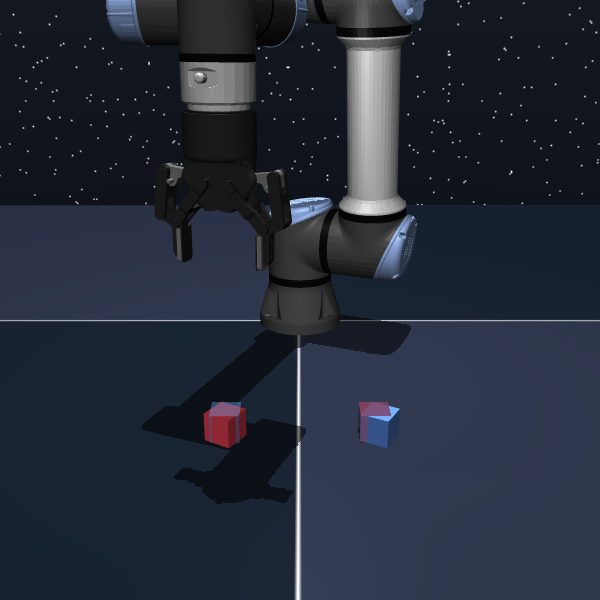}} \\
\subfloat[Cube-triple-task2\label{subFig:triple-task2}]
{\includegraphics[width = 0.3\textwidth]{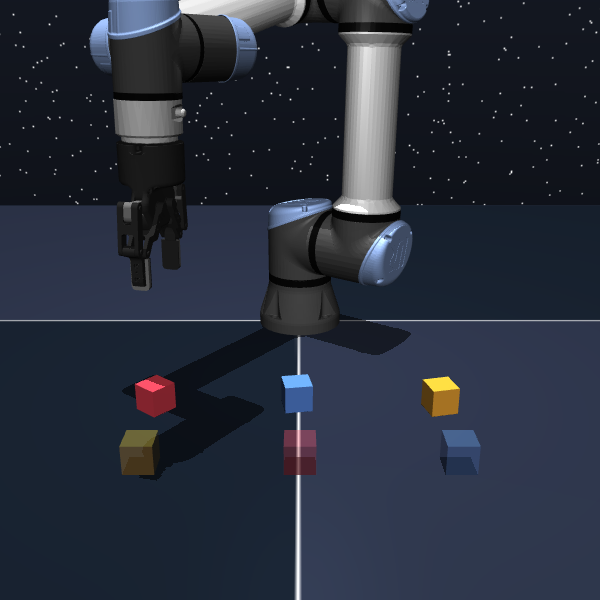}} 
\hspace{5pt}
\subfloat[Cube-triple-task3\label{subFig:triple-task3}]
{\includegraphics[width = 0.3\textwidth]{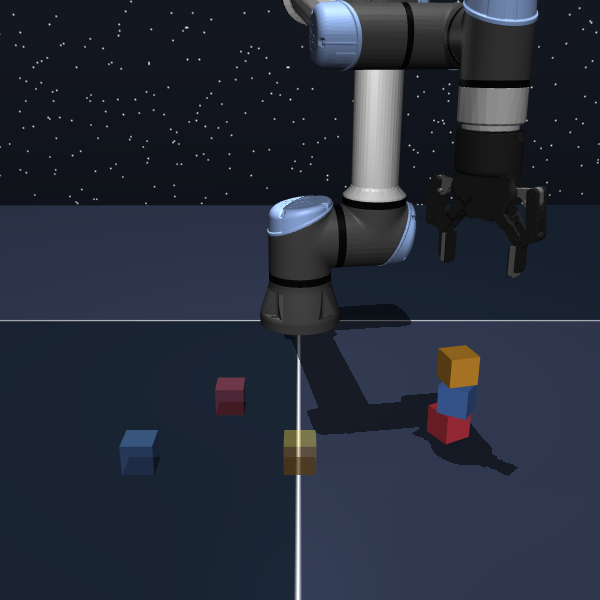}} 
\hspace{5pt}
\subfloat[Cube-triple-task4\label{subFig:triple-task4}]
{\includegraphics[width = 0.3\textwidth]{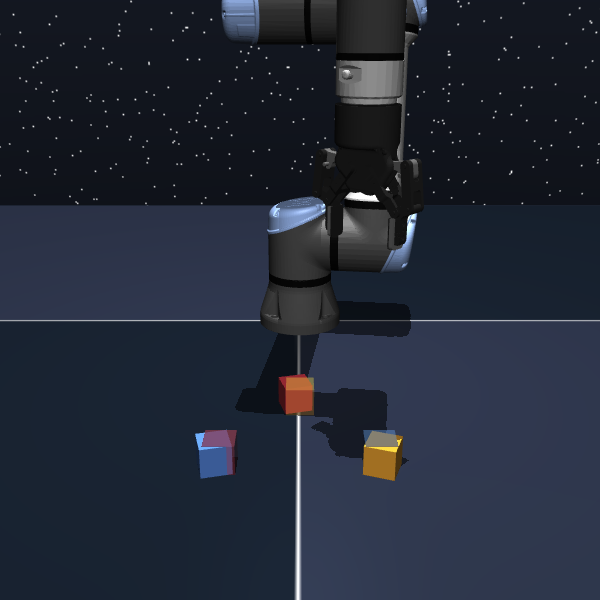}} 
    \caption{\footnotesize \textbf{Snapshots of the 9 challenging long-horizon, sparse-reward manipulation tasks.} }
    \label{fig:env_renders}
\end{figure}

\paragraph{Robomimic benchmark.} We use three challenging tasks from the robomimic domain~\citep{robomimic2021}.
We use the multi-human datasets that were collected by six human operators. Each dataset consists of 50 trajectories provided by each operator, for a total of 300 successful trajectories. The operators were varied in proficiency – there were 2 “worse” operators, 2 “okay” operators, and 2 “better” operators, resulting in diverse, mixed quality datasets. The three tasks are as described below.

\textbf{(1) lift}: requires the robot arm to pick a small cube. This is the simplest task of the benchmark.
    
\textbf{(2) can}: requires the robot arm to pick up a coke can and place in a smaller container bin.
    
\textbf{(3) square}: requires the robot arm to pick a square nut and place it on a rod. The nut is slightly bigger than the rod and requires the arm to move precisely to complete the task successfully.

All of the three robomimic tasks use binary task completion rewards where the agent receives $-1$ reward when the task is not completed and $0$ reward when the task is completed.


\paragraph{OGBench}

\texttt{cube-double/triple}: These three domains contain 2/3 cubes respectively. The tasks in the two domains all involve moving the cubes to their desired locations. 
The reward is $-n_{\mathrm{wrong}}$ where $n_{\mathrm{wrong}}$ is the number of the cubes that are at the wrong position. The episode terminates when all cubes are at the correct position (reward is 0).

    

Below we highlight three representative tasks from each environment:

\textbf{(4) Cube-double-task2 (move)}: Two cubes in different colors are initialized at $(0.35, -0.1, 0.02)$ and $(0.50, -0.1, 0.02)$, and the goal is to move them to $(0.35, 0.1, 0.02)$ and $(0.50, 0.1, 0.02)$.

\textbf{(5) Cube-double-task3 (move)}: The initial cube positions are $(0.35, 0.0, 0.02)$ and $(0.50, 0.0, 0.02)$, and they must be placed at $(0.425, -0.2, 0.02)$ and $(0.425, 0.2, 0.02)$.

\textbf{(6) Cube-double-task4 (swap)}: Two cubes start from $(0.425, -0.1, 0.02)$ and $(0.425, 0.1, 0.02)$, and the objective is to swap their positions to $(0.425, 0.1, 0.02)$ and $(0.425, -0.1, 0.02)$.

\textbf{(7) Cube-triple-task2 (move)}: Three cubes are initialized at $(0.35, -0.2, 0.02)$, $(0.35, 0.0, 0.02)$, and $(0.35, 0.2, 0.02)$, with goals at $(0.50, 0.0, 0.02)$, $(0.50, 0.2, 0.02)$, and $(0.50, -0.2, 0.02)$.
    
\textbf{(8) Cube-triple-task3 (stack)}: A stacked tower of three cubes is initialized at $(0.425, 0.2, 0.02)$, $(0.425, 0.2, 0.06)$, and $(0.425, 0.2, 0.10)$, and the robot must relocate them to $(0.35, -0.1, 0.02)$, $(0.50, -0.2, 0.02)$, and $(0.50, 0.0, 0.02)$, respectively.

\textbf{(9) Cube-triple-task4 (swap)}: Three cubes are initialized at $(0.35, 0.0, 0.02)$, $(0.50, -0.1, 0.02)$, and $(0.50, 0.1, 0.02)$, and they must be cyclically rearranged to $(0.50, -0.1, 0.02)$, $(0.50, 0.1, 0.02)$, and $(0.35, 0.0, 0.02)$.

These tasks highlight increasingly complex spatial reasoning requirements, ranging from coordinated multi-cube relocation to disassembly of stacked structures and cyclic rearrangement.

\clearpage
\section{Visualizations}
This section provides supplementary visual material to demonstrate our model's performance. The included visualizations of successful episodes highlight our policy's ability to generate precise and robust trajectories, showcasing its effectiveness in handling the complexities of \textbf{long-horizon reasoning} and \textbf{sparse rewards} inherent in these robotic manipulation tasks.

\begin{figure}[h]
    \centering
\subfloat[\textbf{Lift}\label{subFig:lift_1}]
{\includegraphics[width = 0.195\textwidth]{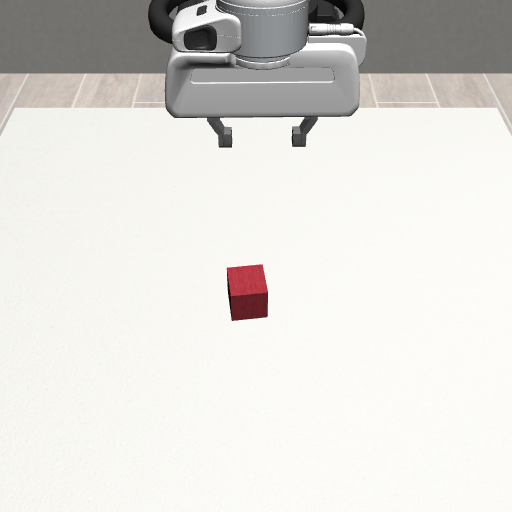}} 
\subfloat[$step=5$\label{subFig:lift_2}]
{\includegraphics[width = 0.195\textwidth]{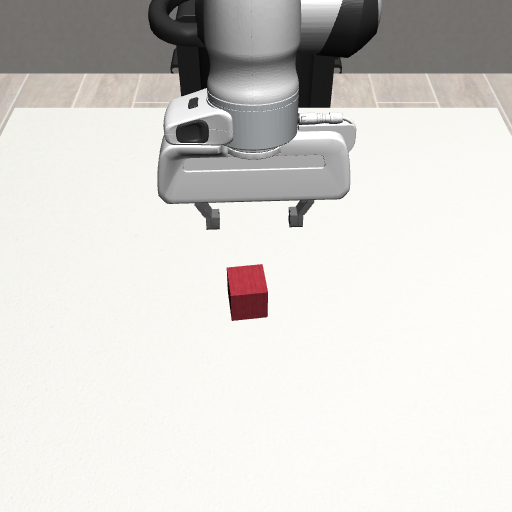}} 
\subfloat[$step=10$\label{subFig:lift_3}]
{\includegraphics[width = 0.195\textwidth]{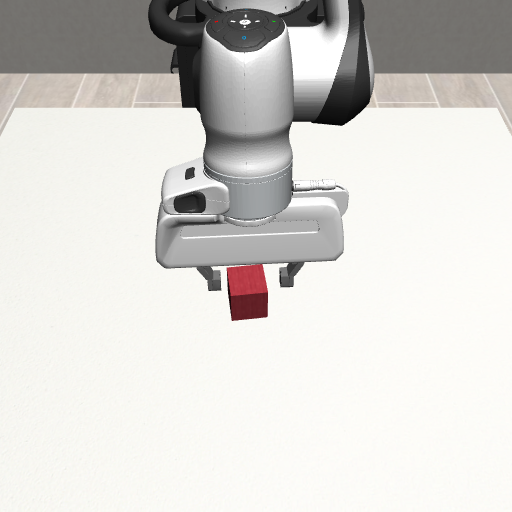}}
\subfloat[$step=15$\label{subFig:lift_4}]
{\includegraphics[width = 0.195\textwidth]{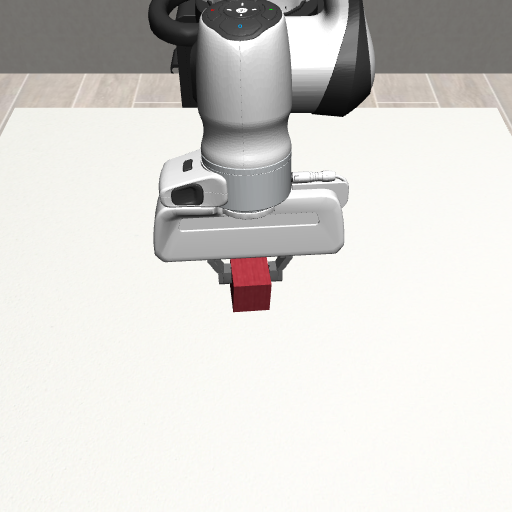}} 
\subfloat[$step=20$\label{subFig:lift_5}]
{\includegraphics[width = 0.195\textwidth]{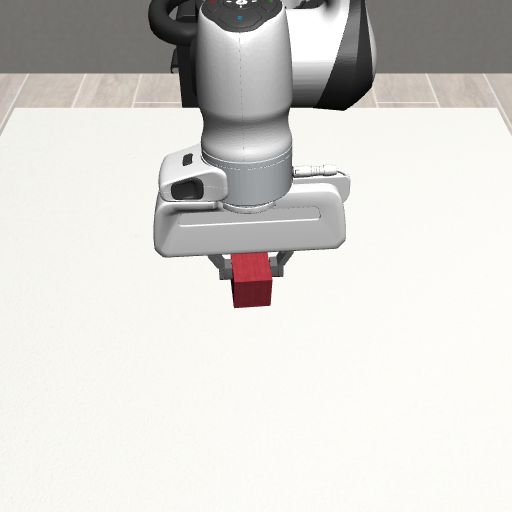}} \\ 
\subfloat[\textbf{Can}\label{subFig:can_1}]
{\includegraphics[width = 0.195\textwidth]{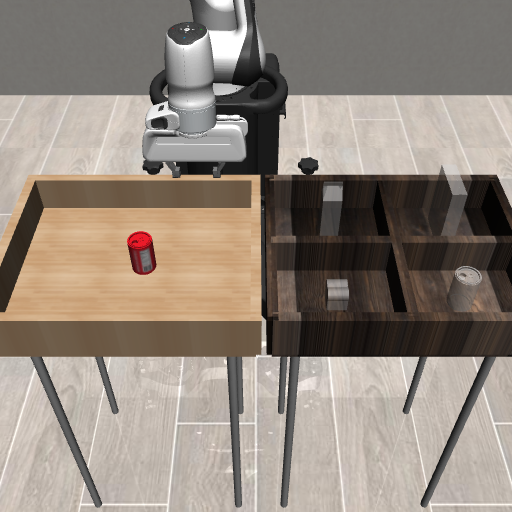}} 
\subfloat[$step=15$\label{subFig:can_2}]
{\includegraphics[width = 0.195\textwidth]{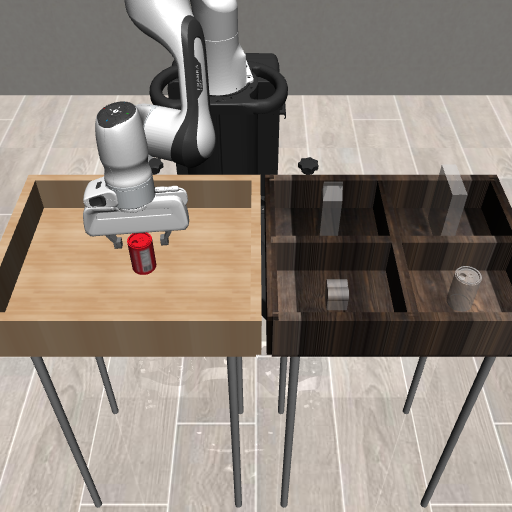}} 
\subfloat[$step=30$\label{subFig:can_3}]
{\includegraphics[width = 0.195\textwidth]{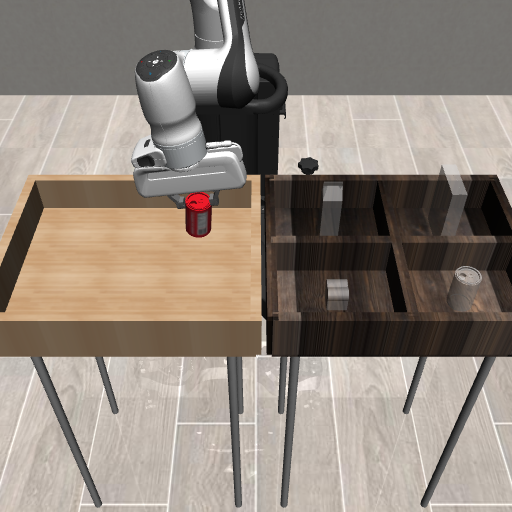}}
\subfloat[$step=45$\label{subFig:can_4}]
{\includegraphics[width = 0.195\textwidth]{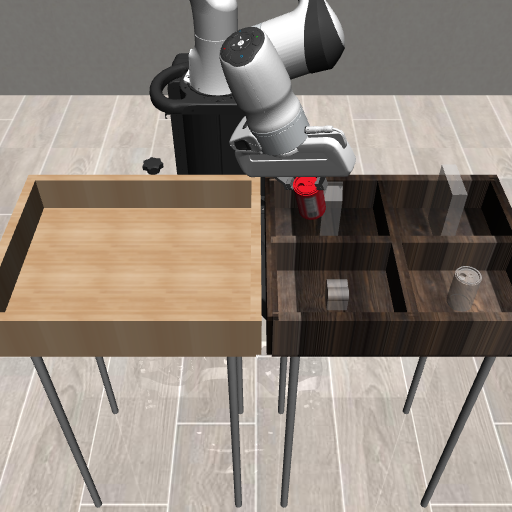}} 
\subfloat[$step=65$\label{subFig:can_5}]
{\includegraphics[width = 0.195\textwidth]{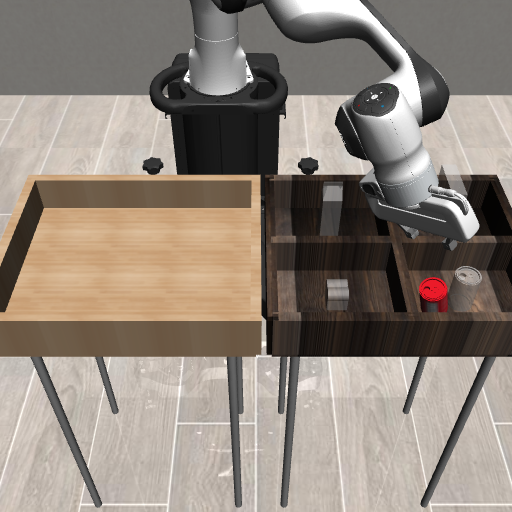}} \\ 
\subfloat[\textbf{Square}\label{subFig:square_1}]
{\includegraphics[width = 0.195\textwidth]{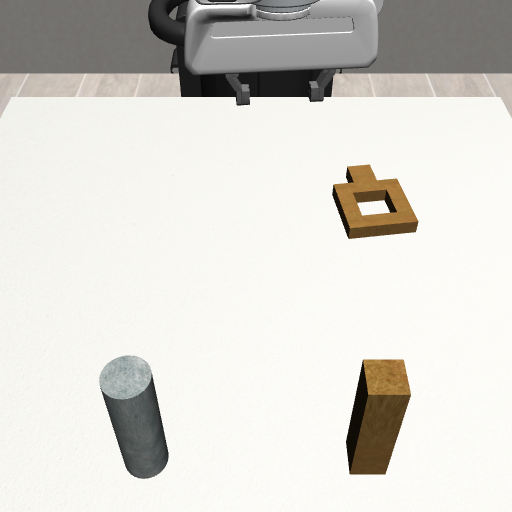}} 
\subfloat[$step=20$\label{subFig:square_2}]
{\includegraphics[width = 0.195\textwidth]{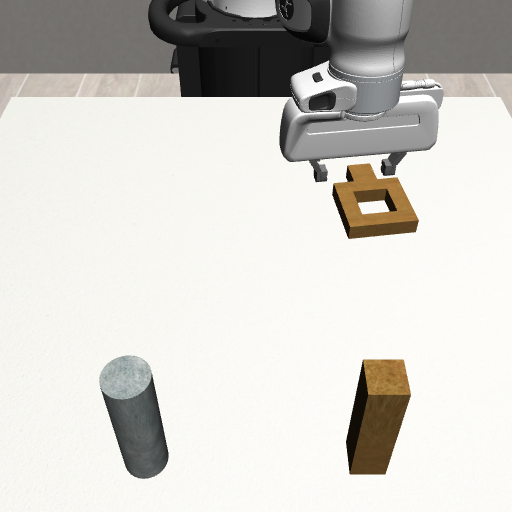}} 
\subfloat[$step=40$\label{subFig:square_3}]
{\includegraphics[width = 0.195\textwidth]{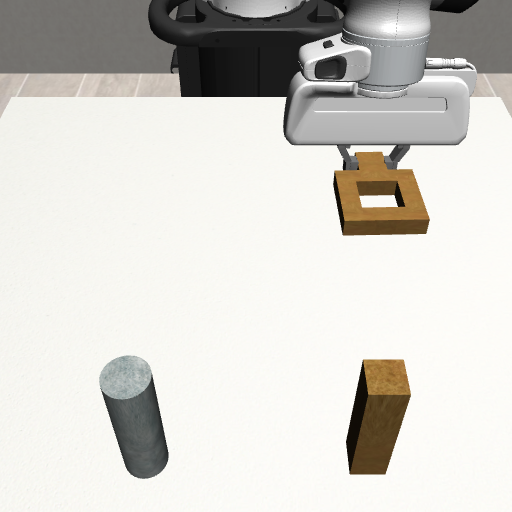}}
\subfloat[$step=60$\label{subFig:square_4}]
{\includegraphics[width = 0.195\textwidth]{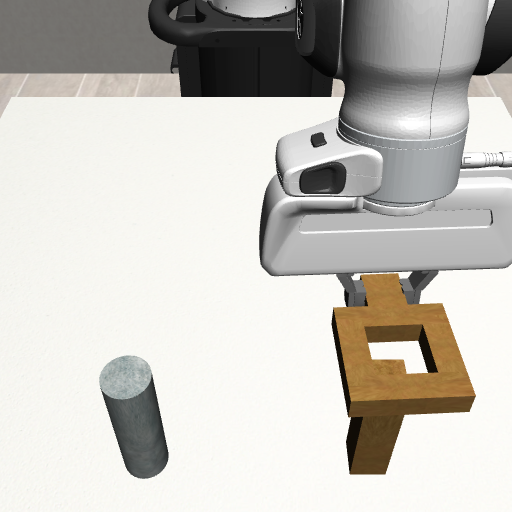}} 
\subfloat[$step=70$\label{subFig:square_5}]
{\includegraphics[width = 0.195\textwidth]{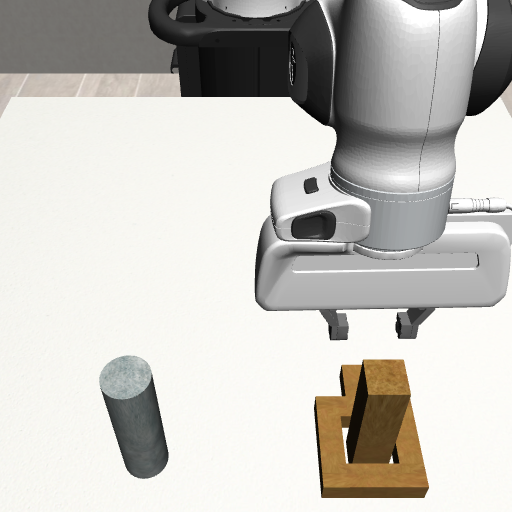}} \\ 
\subfloat[\textbf{Double-task2}\label{subFig:double-task2_1}]
{\includegraphics[width = 0.195\textwidth]{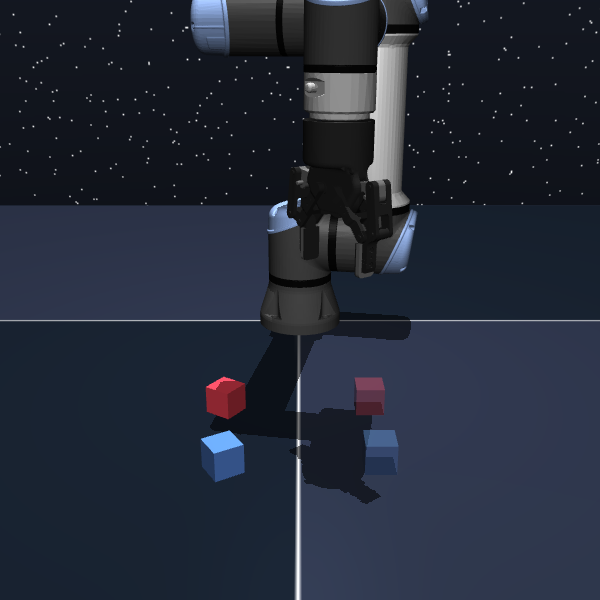}} 
\subfloat[$step=15$\label{subFig:double-task2_2}]
{\includegraphics[width = 0.195\textwidth]{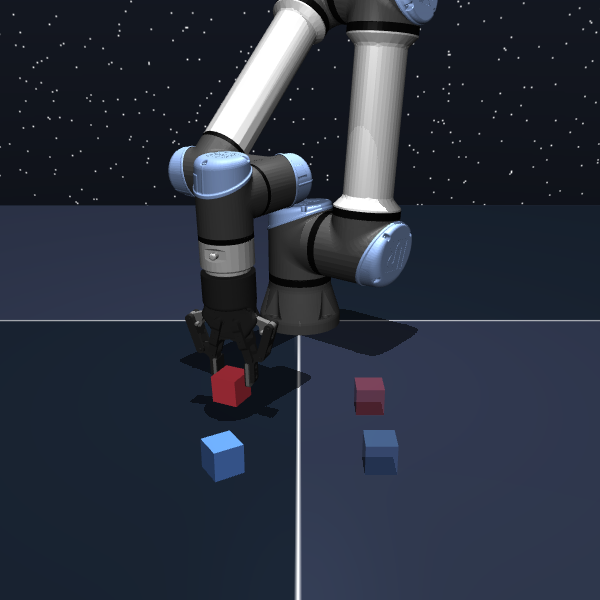}} 
\subfloat[$step=30$\label{subFig:double-task2_3}]
{\includegraphics[width = 0.195\textwidth]{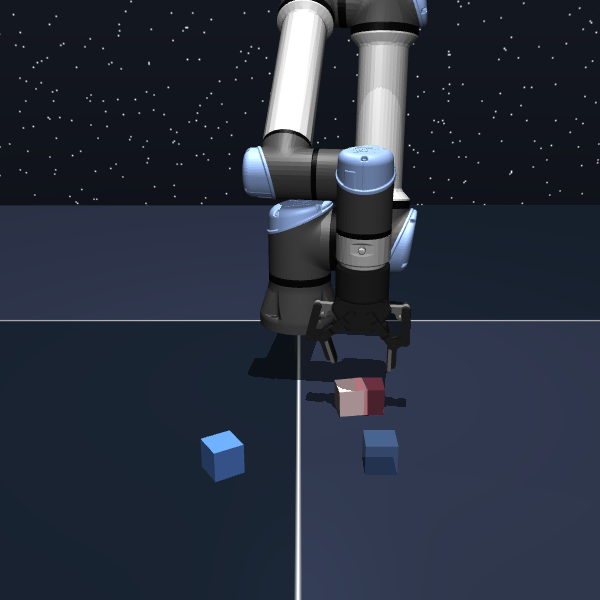}}
\subfloat[$step=45$\label{subFig:double-task2_4}]
{\includegraphics[width = 0.195\textwidth]{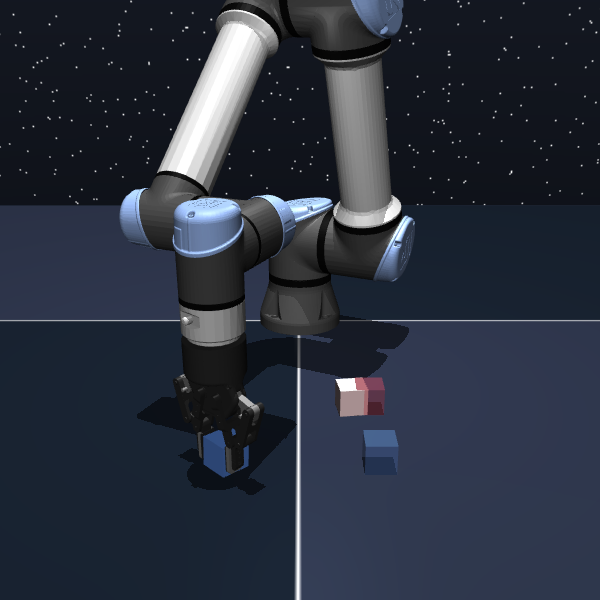}} 
\subfloat[$step=60$\label{subFig:double-task2_5}]
{\includegraphics[width = 0.195\textwidth]{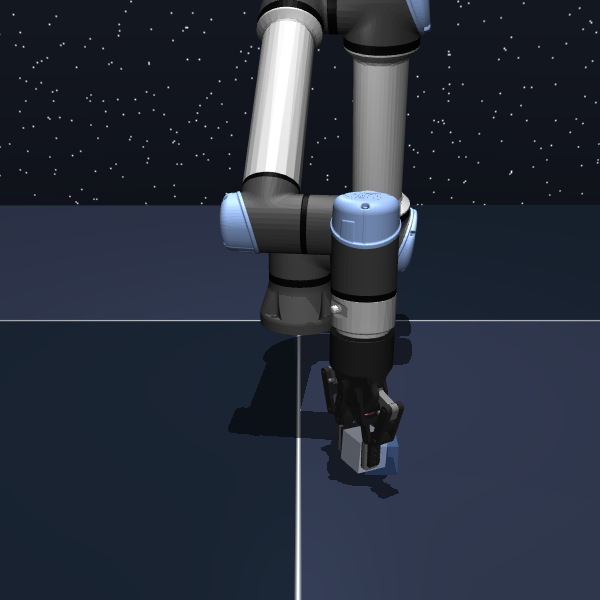}} \\ 
\subfloat[\textbf{Triple-task2}\label{subFig:double-task3_1}]
{\includegraphics[width = 0.195\textwidth]{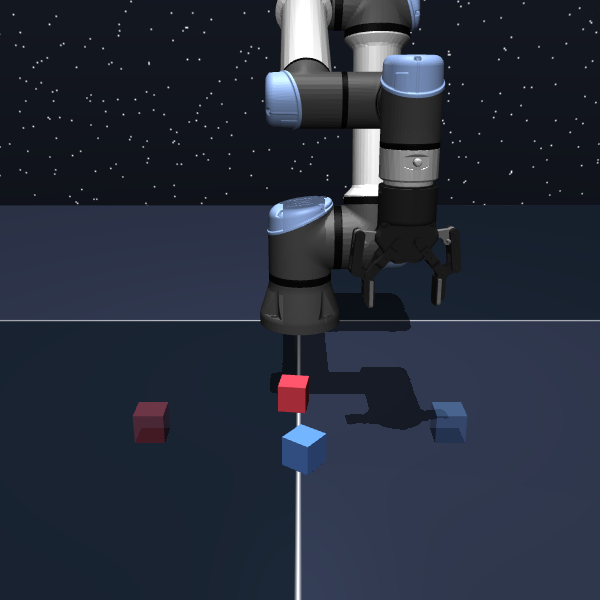}} 
\subfloat[$step=20$\label{subFig:double-task3_2}]
{\includegraphics[width = 0.195\textwidth]{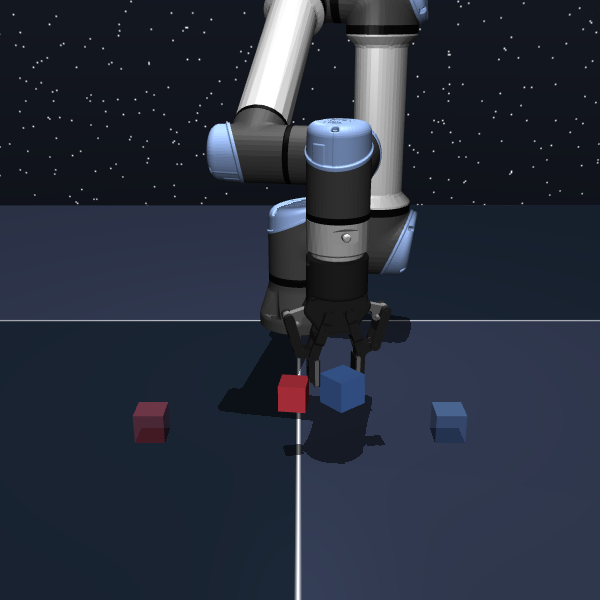}} 
\subfloat[$step=30$\label{subFig:double-task3_3}]
{\includegraphics[width = 0.195\textwidth]{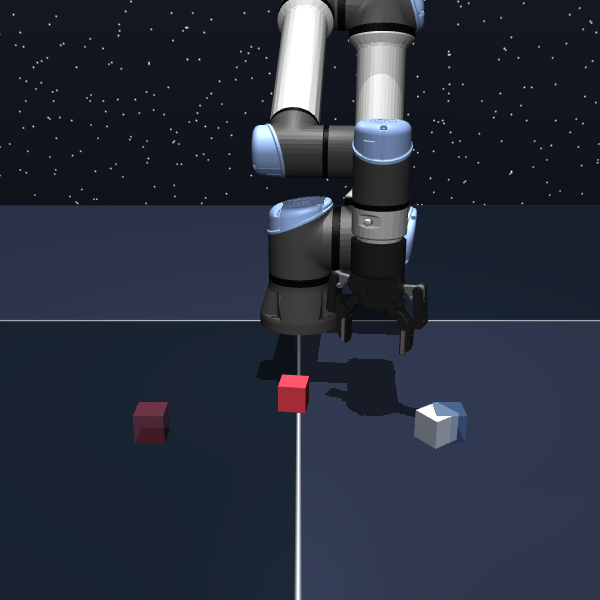}}
\subfloat[$step=45$\label{subFig:double-task3_4}]
{\includegraphics[width = 0.195\textwidth]{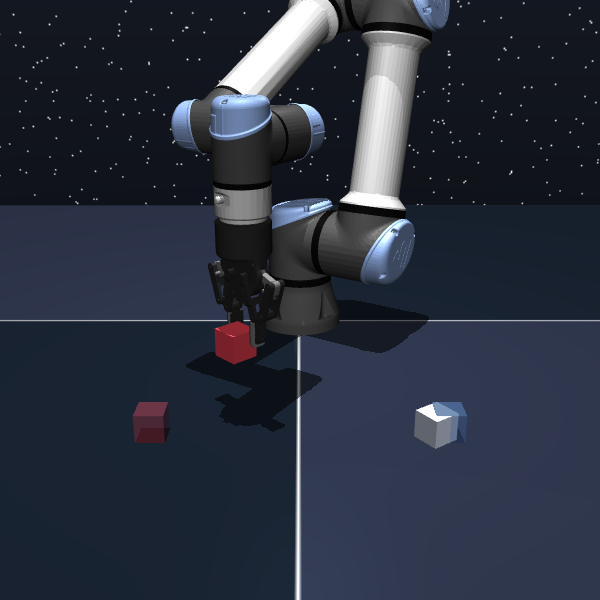}} 
\subfloat[$step=50$\label{subFig:double-task3_5}]
{\includegraphics[width = 0.195\textwidth]{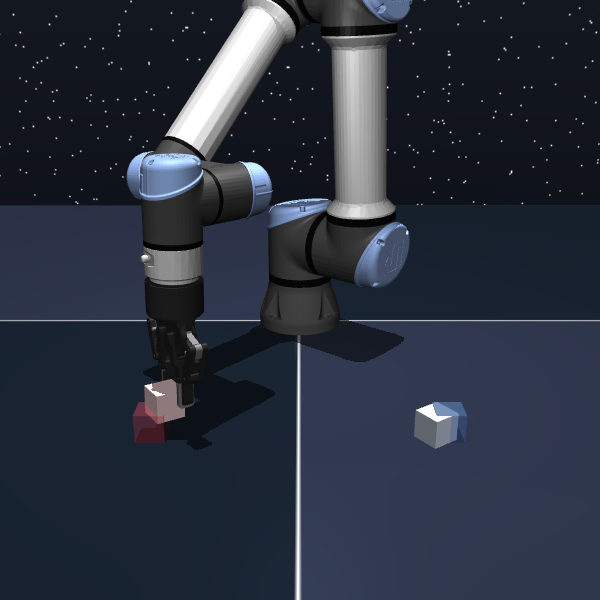}} \\
    \caption{\footnotesize \textbf{Visualizations of typical success episodes: Robomimic-lift, Robomimic-can, Robomimic-square, Cube-double-task2, and Cube-double-task3.} }
    \label{fig:demo1}
\end{figure}

\begin{figure}[h]
    \centering

\subfloat[\textbf{Double-task4}\label{subFig:double-task4_1}]
{\includegraphics[width = 0.195\textwidth]{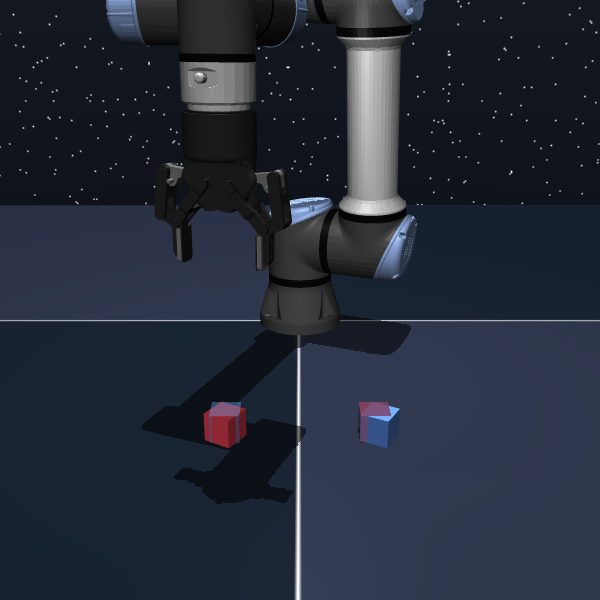}} 
\subfloat[$step=40$\label{subFig:double-task4_2}]
{\includegraphics[width = 0.195\textwidth]{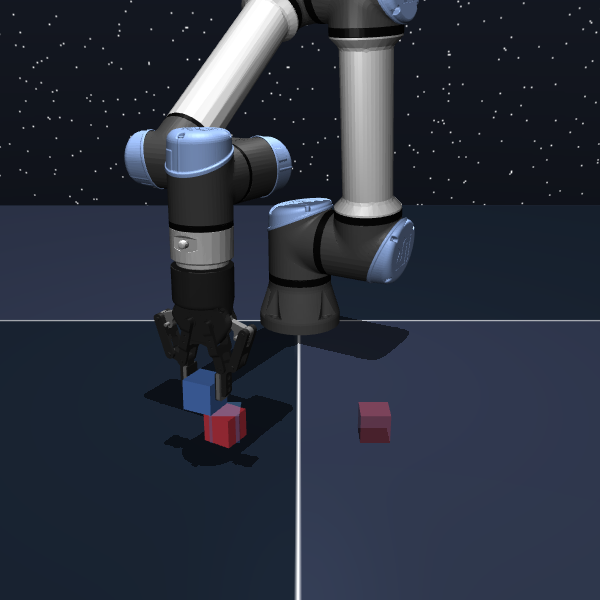}} 
\subfloat[$step=75$\label{subFig:double-task4_3}]
{\includegraphics[width = 0.195\textwidth]{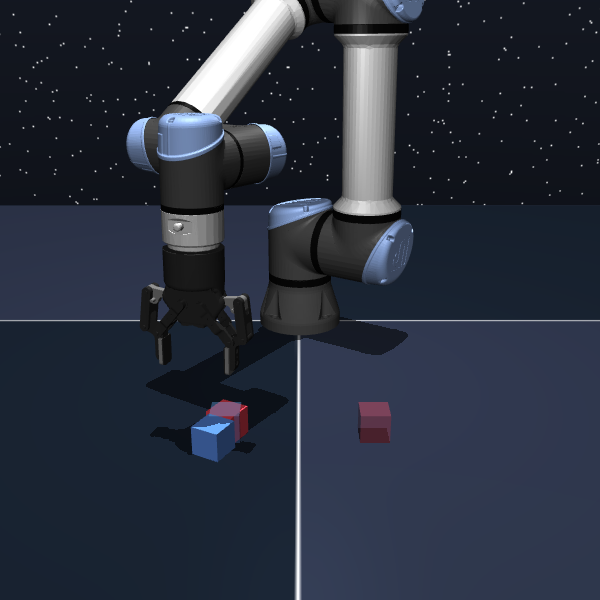}}
\subfloat[$step=120$\label{subFig:double-task4_4}]
{\includegraphics[width = 0.195\textwidth]{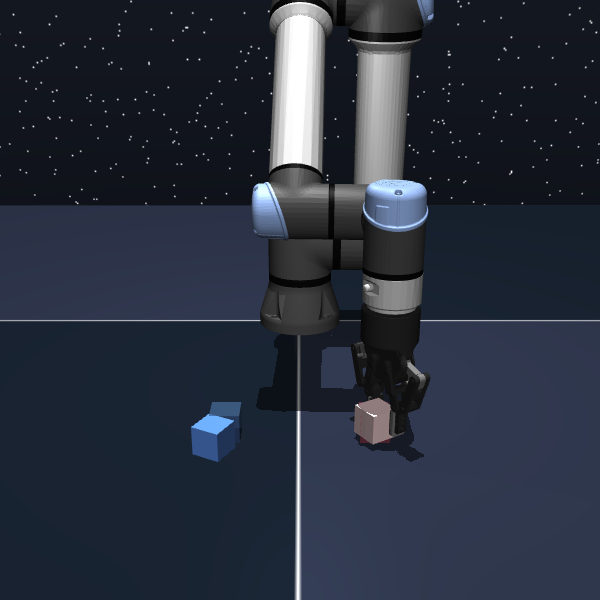}} 
\subfloat[$step=155$\label{subFig:double-task4_5}]
{\includegraphics[width = 0.195\textwidth]{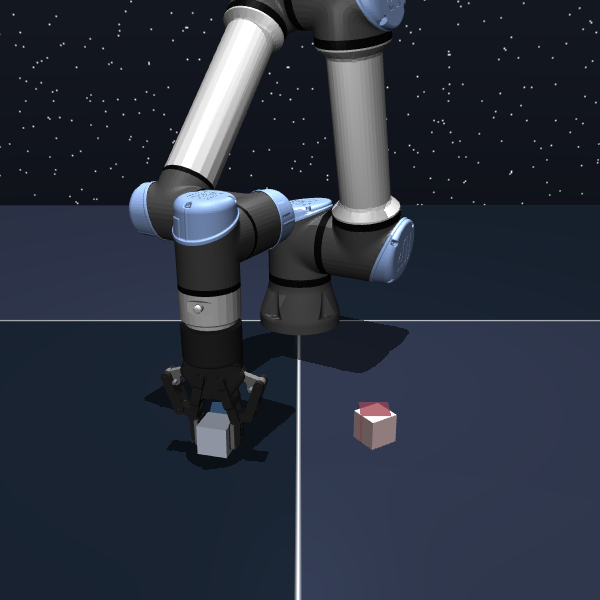}} \\ 
\subfloat[\textbf{Triple-task2}\label{subFig:triple-task2_1}]
{\includegraphics[width = 0.195\textwidth]{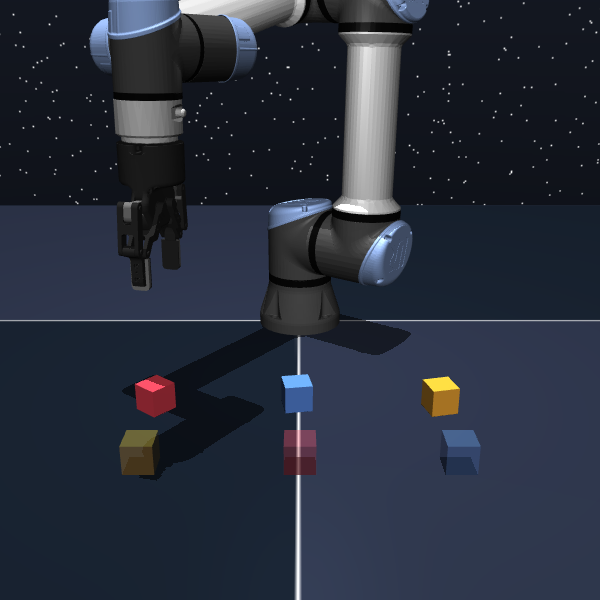}} 
\subfloat[$step=30$\label{subFig:triple-task2_2}]
{\includegraphics[width = 0.195\textwidth]{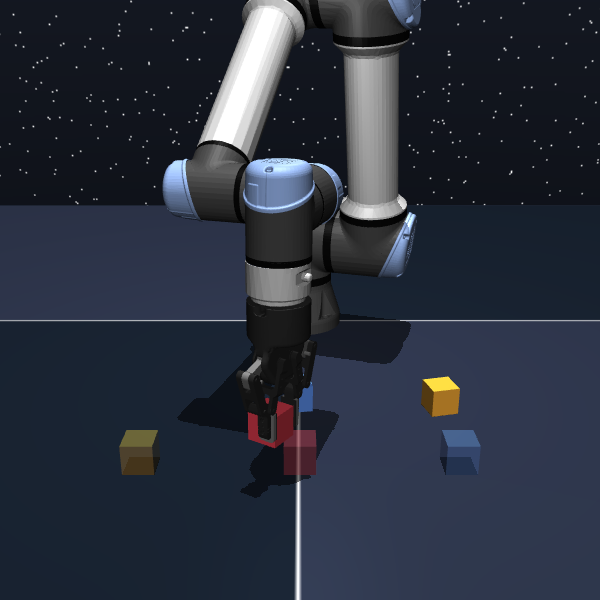}} 
\subfloat[$step=60$\label{subFig:triple-task2_3}]
{\includegraphics[width = 0.195\textwidth]{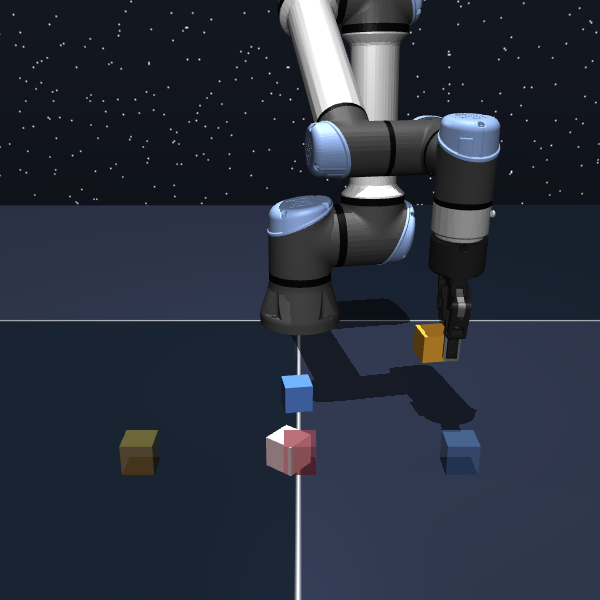}}
\subfloat[$step=90$\label{subFig:triple-task2_4}]
{\includegraphics[width = 0.195\textwidth]{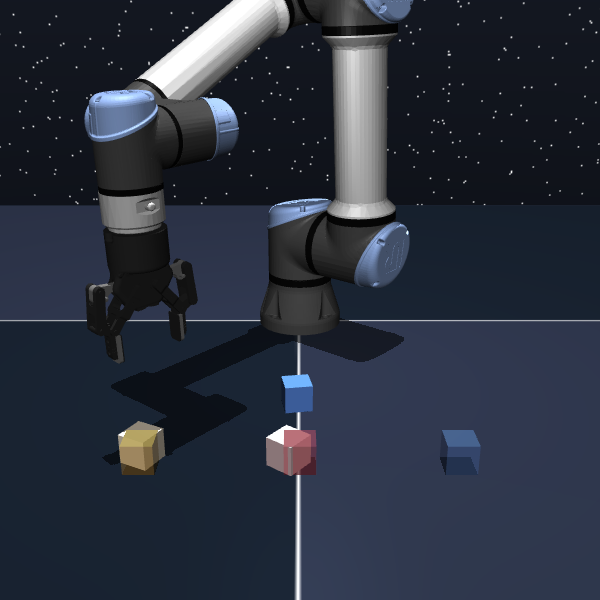}} 
\subfloat[$step=125$\label{subFig:triple-task2_5}]
{\includegraphics[width = 0.195\textwidth]{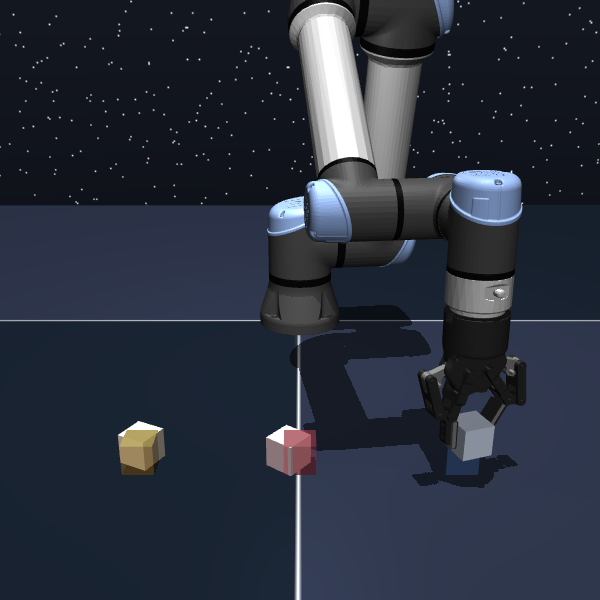}} \\ 
\subfloat[\textbf{Triple-task3}\label{subFig:triple-task3_1}]
{\includegraphics[width = 0.195\textwidth]{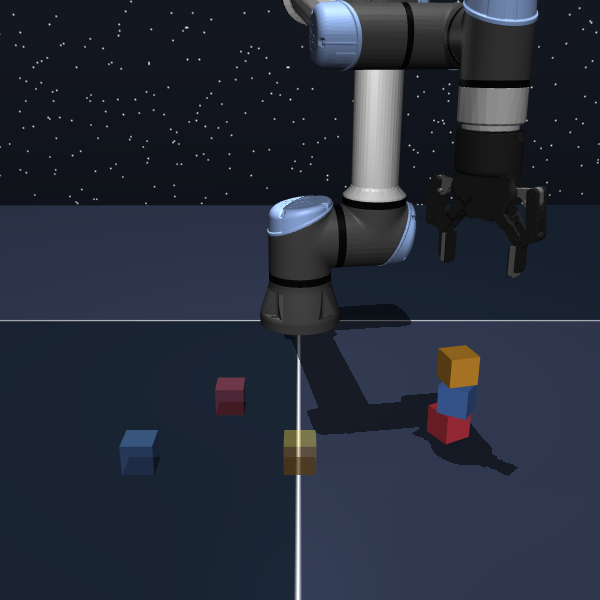}} 
\subfloat[$step=40$\label{subFig:triple-task3_2}]
{\includegraphics[width = 0.195\textwidth]{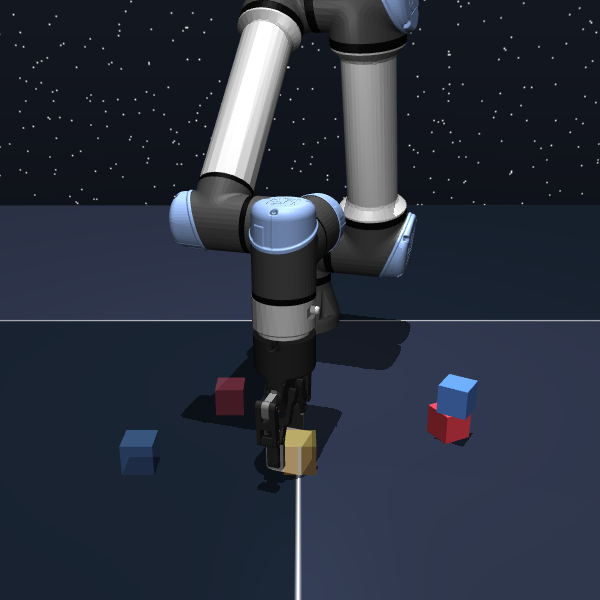}} 
\subfloat[$step=75$\label{subFig:triple-task3_3}]
{\includegraphics[width = 0.195\textwidth]{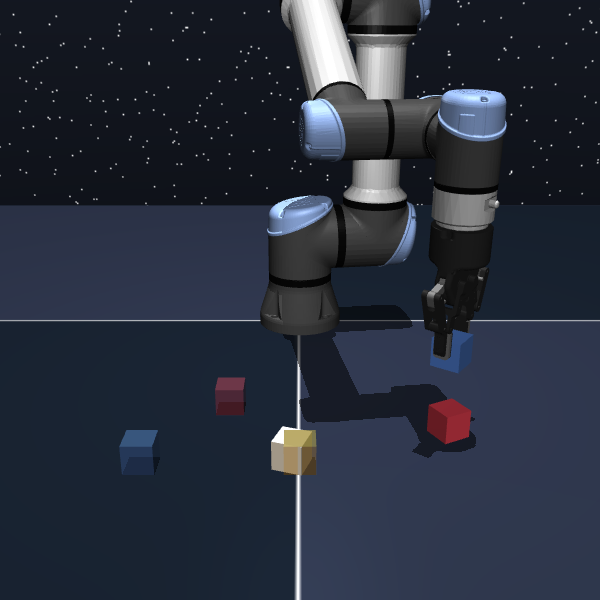}}
\subfloat[$step=120$\label{subFig:triple-task3_4}]
{\includegraphics[width = 0.195\textwidth]{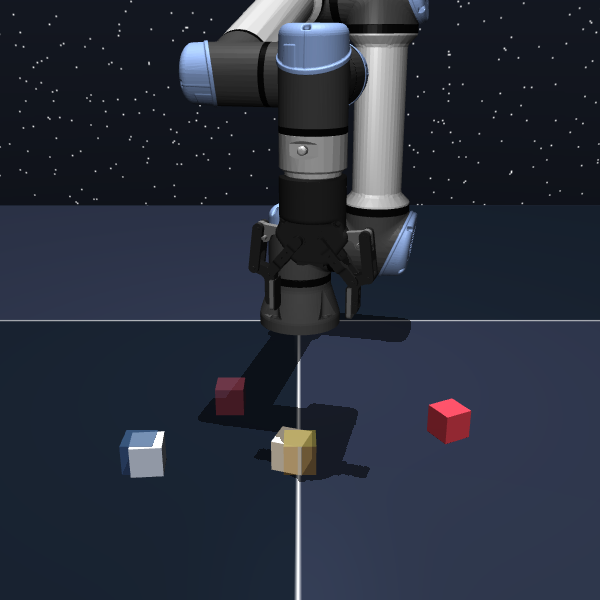}} 
\subfloat[$step=165$\label{subFig:triple-task3_5}]
{\includegraphics[width = 0.195\textwidth]{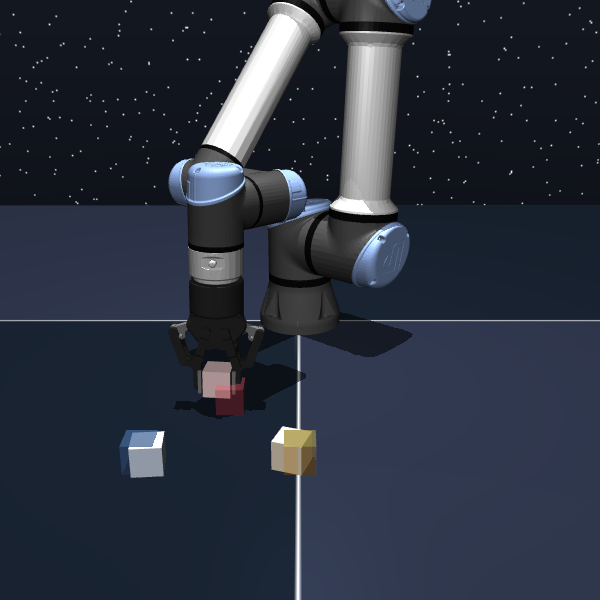}} \\ 
\subfloat[\textbf{Triple-task4}\label{subFig:triple-task4_1}]
{\includegraphics[width = 0.195\textwidth]{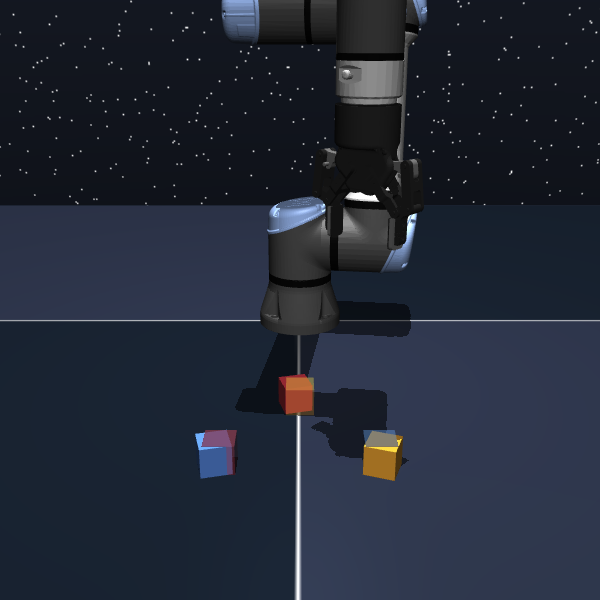}} 
\subfloat[$step=25$\label{subFig:triple-task4_2}]
{\includegraphics[width = 0.195\textwidth]{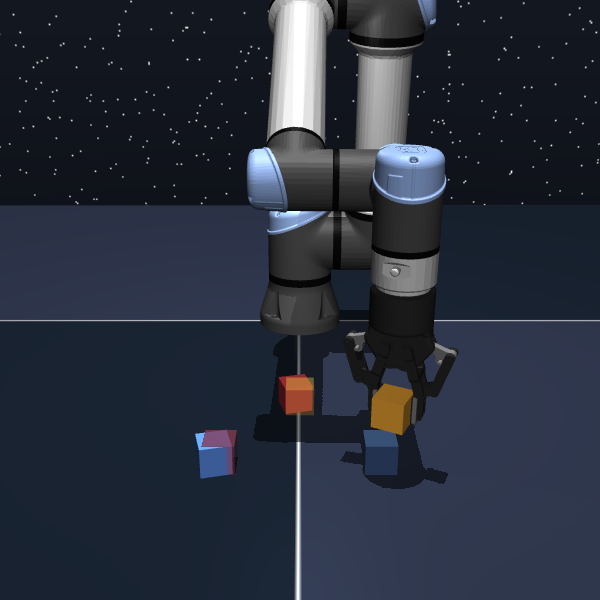}} 
\subfloat[$step=50$\label{subFig:triple-task4_3}]
{\includegraphics[width = 0.195\textwidth]{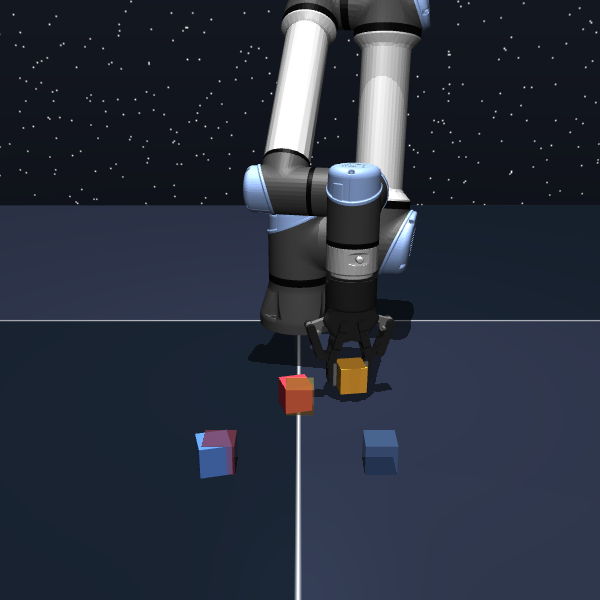}}
\subfloat[$step=75$\label{subFig:triple-task4_4}]
{\includegraphics[width = 0.195\textwidth]{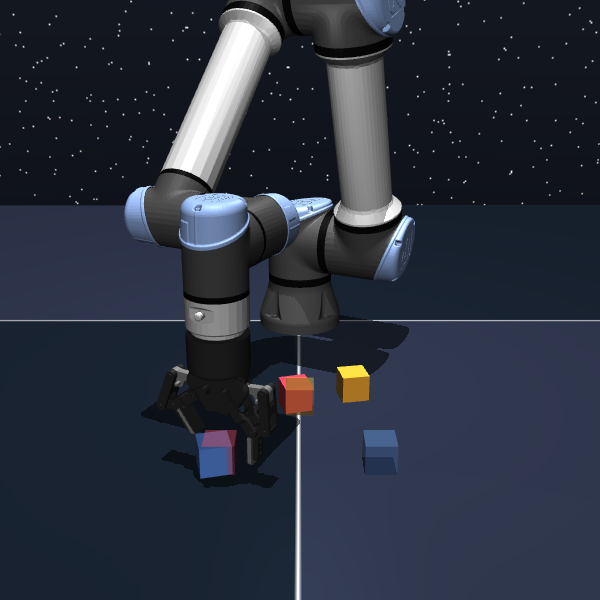}} 
\subfloat[$step=100$\label{subFig:triple-task4_5}]
{\includegraphics[width = 0.195\textwidth]{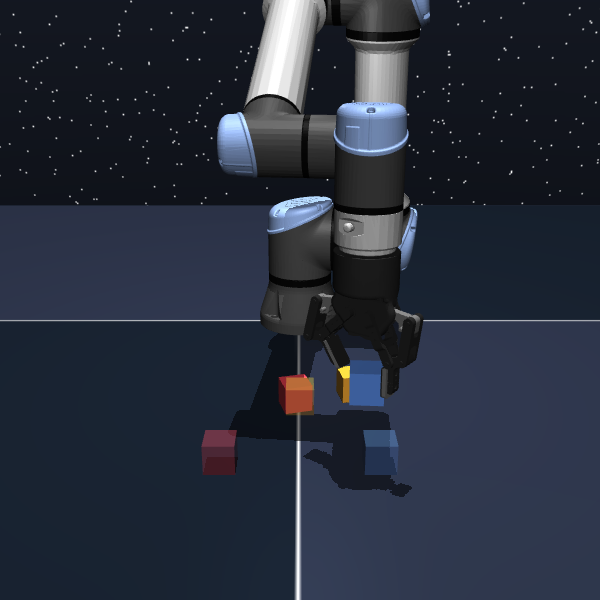}} \\ 
\subfloat[$step=125$\label{subFig:triple-task4_6}]
{\includegraphics[width = 0.195\textwidth]{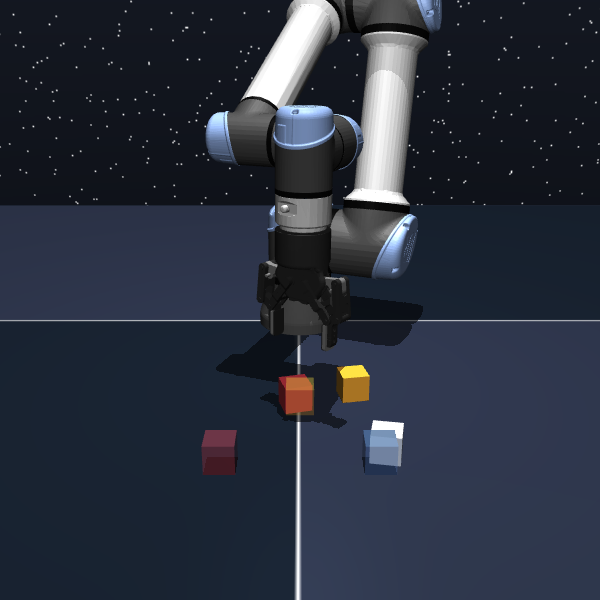}} 
\subfloat[$step=150$\label{subFig:triple-task4_7}]
{\includegraphics[width = 0.195\textwidth]{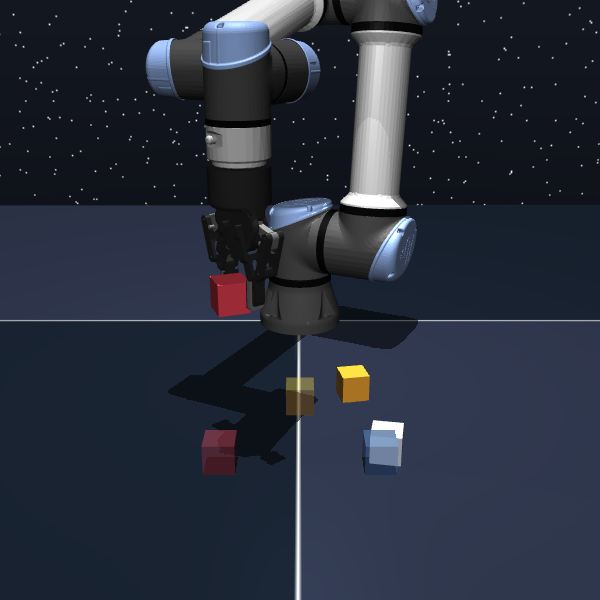}} 
\subfloat[$step=175$\label{subFig:triple-task4_8}]
{\includegraphics[width = 0.195\textwidth]{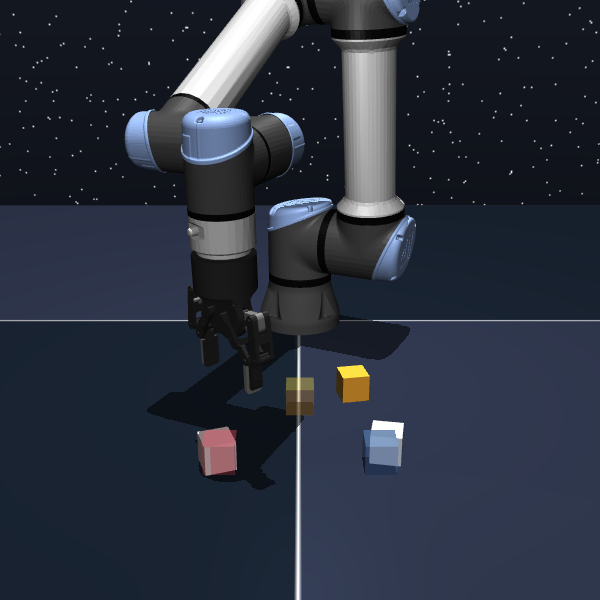}}
\subfloat[$step=200$\label{subFig:triple-task4_9}]
{\includegraphics[width = 0.195\textwidth]{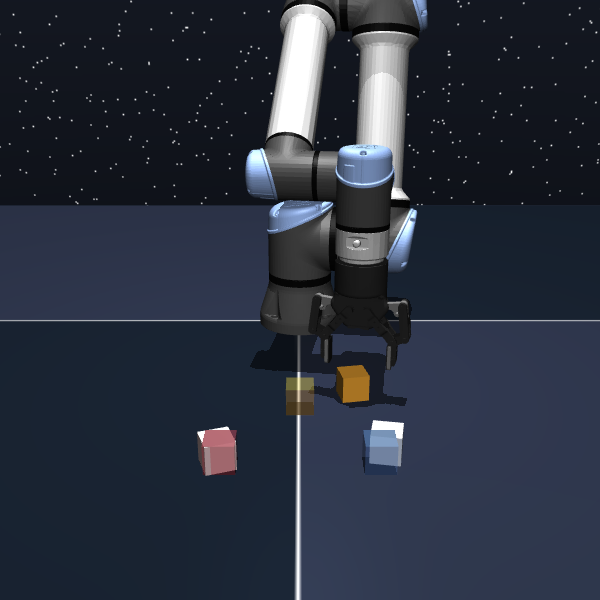}} 
\subfloat[$step=235$\label{subFig:triple-task4_10}]
{\includegraphics[width = 0.195\textwidth]{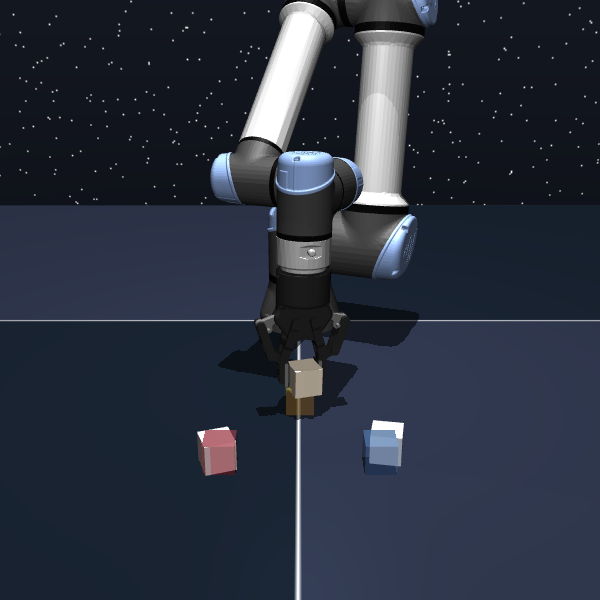}} \\ 
    \caption{\footnotesize \textbf{Visualizations of typical success episodes: Cube-double-task4, Cube-triple-task2, Cube-triple-task3, and Cube-triple-task4.} }
    \label{fig:demo2}
\end{figure}

\section{Reproducibility statement}
The hyperparameters of all algorithms  are shown in Table \ref{tab_hyper}.

\begin{table}[ht]
    \centering
    \begin{tabular}{@{}lc@{}}
        \toprule
        \textbf{Parameter} & \textbf{Value} \\
        \midrule
        \textit{\textbf{Shared}}\\
        \\
        Batch size &  $256$ \\
        Discount factor ($\gamma$) & $0.99$ \\
        Optimizer & Adam \\
        Learning rate & $3 \times 10^{-4}$ \\
        Target network update rate ($\tau$) & $5 \times 10^{-3}$ \\
        UTD Ratio & $1$ \\
        Evaluation interval & 5000 \\
        Number of evaluation episodes & 50 \\
        Number of offline training steps & $1 \times 10^6$ (1M)\\
        Number of online training steps & $1 \times 10^6$ (1M) \\
        Number of flow steps ($T$) & $10$ \\
        Policy network width & $512$ \\
        Policy network depth & $4$ hidden layers \\
        Policy activation function & GELU \\
        Policy layer normalization & False \\
        Value network width & $512$ \\
        Value network depth & $4$ hidden layers \\
        Value activation function & GELU \\
        Value layer normalization & True \\
        \multirow{1}{*}{Value ensemble size ($K$)} &  2 \\
        Value ensemble operator & MEAN \\

        \midrule
        \textit{\textbf{FQL}}\\
        \\
        Flow step & 10\\
        \multirow{2}{*}{BC weight ($\alpha$)} &  10000 for \texttt{lift}, \texttt{can} and \texttt{square} \\
        & 300 for \texttt{cube-double-*} and \texttt{cube-triple-*} \\
        \midrule
        \textit{\textbf{QC}} \\
        \\
        Chunking horizon length & 5\\
        Flow step & 10\\
        \multirow{2}{*}{Number of best-of-$N$} &  16 for \texttt{lift}, \texttt{can} and \texttt{square} \\
        & 32 for \texttt{cube-double-*} and \texttt{cube-triple-*} \\
        \midrule
      
        \textit{\textbf{MVP-IVC (ours)}}\\
        \\
        IVC ratio ($\lambda$) & 1.0\\
        \multirow{2}{*}{Number of best-of-$N$} &  16 for \texttt{lift}, \texttt{can} and \texttt{square} \\
        & 32 for \texttt{cube-double-*} and \texttt{cube-triple-*} \\
    \bottomrule
    \end{tabular}
    \vspace{2mm}
    \caption{\footnotesize \textbf{Detailed hyperparameters.}}
    \label{tab_hyper}
\end{table}

\clearpage
\section{LLM Usage Disclosure}
We used ChatGPT to refine the grammar and improve the clarity of the text. All LLM-generated suggestions were reviewed and edited by the authors, who take full responsibility for the final content.

\end{document}